\documentclass{article}

\usepackage[utf8]{inputenc} %
\usepackage[T1]{fontenc}    %
\usepackage{hyperref}       %
\usepackage[round]{natbib}
\usepackage{url}            %
\usepackage{booktabs}       %
\usepackage{amsfonts}
\usepackage{amsmath} %
\usepackage{amsthm}
\usepackage{amssymb}
\usepackage{nicefrac}       %
\usepackage{microtype}      %
\usepackage[ruled,norelsize,noend]{algorithm2e}
\usepackage{bm}
\usepackage{graphicx}
\usepackage{color}
\usepackage{stmaryrd}
\usepackage{subfiles}
\usepackage{subcaption}
\usepackage{wrapfig}
\usepackage[parfill]{parskip}  
\usepackage{tikz}
\usetikzlibrary{arrows, positioning, calc}
\usetikzlibrary{decorations.pathreplacing, patterns}

\usepackage[nameinlink]{cleveref}

\makeatletter
\def\NAT@spacechar{~}%
\makeatother

\usepackage{etoolbox}
\newtoggle{supplementary}
\toggletrue{supplementary}

\definecolor{linkcolor}{RGB}{83,83,182}
\hypersetup{
    colorlinks=true,
    citecolor=linkcolor,
    linkcolor=linkcolor,
    urlcolor=linkcolor
}

\usepackage{array}
\newcolumntype{H}{>{\setbox0=\hbox\bgroup}m{0em}<{\egroup}@{}}

\usepackage{thmtools, thm-restate}
\newtheorem{theorem}{Theorem}[section]
\newtheorem{lemma}[theorem]{Lemma}
\newtheorem{proposition}[theorem]{Proposition}

\newtheorem{assumption}[theorem]{Assumption}
\newtheorem{definition}[theorem]{Definition}

\newcounter{condition} %
\newenvironment{condition}{
  \[
  \refstepcounter{condition}
}{\]}

\DeclareMathOperator*{\argmin}{arg\,min}
\DeclareMathOperator*{\sign}{sign}
\DeclareMathOperator{\Id}{Id}
\DeclareMathOperator*{\supp}{supp}
\DeclareMathOperator{\st}{ST}

\def\Rset{\mathbb R}
\def\Nset{\mathbb N}

\usepackage[textwidth=\marginparwidth, textsize=tiny]{todonotes}
\setlength{\marginparwidth}{1.6cm} %

\def\ie{\emph{i.e.}~}
\def\eg{\emph{e.g.}~}
\newcommand{\cmark}{$\checkmark$}%
\newcommand{\xmark}{$\times$}%

\graphicspath{{figures/}}

\renewcommand\labelenumi{(\roman{enumi})}
\renewcommand\theenumi\labelenumi

\title{Learning step sizes for unfolded sparse coding}

\author{
Pierre Ablin$^*$ \and Thomas Moreau$^*$ \and Mathurin Massias \and Alexandre Gramfort
\\
\\
Inria, Universit\'e Paris-Saclay
\\
Saclay, France
\\
$^*$: both authors contributed equally.
}

\begin{document}

\maketitle

\begin{abstract}

Sparse coding is typically solved by iterative optimization techniques, such as the Iterative Shrinkage-Thresholding Algorithm (ISTA).
Unfolding and learning weights of ISTA using neural networks is a practical way to accelerate estimation.
In this paper, we study the selection of adapted step sizes for ISTA.
We show that a simple step size strategy can improve the convergence rate of ISTA by leveraging the sparsity of the iterates.
However, it is impractical in most large-scale applications.
Therefore, we propose a network architecture where only the step sizes of ISTA are learned.
We demonstrate that for a large class of unfolded algorithms, if the algorithm converges to the solution of the Lasso, its last layers correspond to ISTA with learned step sizes.
Experiments show that our method is competitive with state-of-the-art networks when the solutions are sparse enough.
\end{abstract}

\section{Introduction}
\label{sec:intro}

The resolution of convex optimization problems by iterative algorithms has become a key part of machine learning and signal processing pipelines.
Amongst these problems, special attention has been devoted to the Lasso \citep{tibshirani1996regression}, due to the attractive sparsity properties of its solution (see \citealt{Hastie2015} for an extensive review).
For a given input $x \in \Rset^n~,$ a dictionary $D \in \Rset^{n \times m}$ and a regularization parameter $\lambda > 0 ~,$ the Lasso problem is
\begin{equation}\label{eq:lasso}
    z^*(x) \in \argmin_{z \in \Rset^m} F_x(z) \quad \textrm{with}\quad F_x(z) \triangleq \frac{1}{2} \| x - Dz \|^2 + \lambda \|z \|_1 \enspace.
\end{equation}

A variety of algorithms exist to solve Problem~\eqref{eq:lasso}, \eg proximal coordinate descent
\citep{Tseng01,Friedman2007}, Least Angle Regression \citep{Efron_Hastie_Johnstone_Tibshirani04} or proximal splitting methods \citep{Combettes2011}.
The focus of this paper is on the Iterative Shrinkage-Thresholding Algorithm (ISTA, \citealt{Daubechies2004}), which is a proximal-gradient method applied to Problem~\eqref{eq:lasso}.
ISTA starts from $z^{(0)} = 0$ and iterates
\begin{equation}
    \label{eq:ista}
    z^{(t+1)} = \st\left(z^{(t)} - \frac{1}{L}D^{\top}(Dz^{(t)} - x),
                         \frac{\lambda}{L}\right) \enspace,
\end{equation}
where $\st$ is the soft-thresholding operator defined as $\st(x, u) \triangleq \text{sign}(x)\max(|x| - u, 0)~,$ and $L$ is the greatest eigenvalue of $D^\top D~.$
In the general case, ISTA converges at rate $1/t~,$ which can be improved to the \emph{optimal} rate $1/t^2$ \citep{Nesterov83}.
However, this optimality stands in the worst possible case, and linear rates are achievable in practice \citep{liang2014local}.

A popular line of research to improve the speed of Lasso solvers is to try to identify the support of $z^*~,$ in order to diminish the size of the optimization problem \citep{ElGhaoui_Viallon_Rabbani12,Ndiaye_Fercoq_Gramfort_Salmon16b,Johnson_Guestrin15,Massias_Gramfort_Salmon18}.
Once the support is identified, larger steps can also be taken, leading to improved rates for first order algorithms \citep{liang2014local,Poon18,sun19a}.

However, these techniques only consider the case where a single Lasso problem is solved.
When one wants to solve the Lasso for many samples $\{x^i\}_{i =1}^N$ -- \eg in dictionary learning \citep{Olshausen1997} -- it is proposed by~\citet{Gregor2010} to \emph{learn} a $T$-layers neural network of parameters $\Theta~,$ $\Phi_{\Theta}: \Rset^n \rightarrow \Rset^m$ such that $\Phi_{\Theta}(x) \simeq z^*(x)~.$
This Learned-ISTA (LISTA) algorithm yields better solution estimates than ISTA on new samples for the same number of iterations/layers.
This idea has led to a profusion of literature (summarized in \autoref{tab:literature_network} in appendix).
Recently, it has been hinted by \citet{Zhang2018, Ito2018, Liu2019} that only a few well-chosen parameters can be learned while retaining the performances of LISTA.

In this article, we study strategies for LISTA where only step sizes are learned.
In \autoref{sec:steps}, we propose Oracle-ISTA, an analytic strategy to obtain larger step sizes in ISTA.
We show that the proposed algorithm's convergence rate can be much better than that of ISTA.
However, it requires computing a large number of Lipschitz constants which is a burden in high dimension.
This motivates the introduction of Step-LISTA (SLISTA) networks in \autoref{sec:learning}, where only a step size parameter is learned per layer.
As a theoretical justification, we show in \autoref{thm:coupling} that the last layers of \emph{any} deep LISTA network converging on the Lasso \emph{must} correspond to ISTA iterations with learned step sizes.
We validate the soundness of this approach with numerical experiments in \Cref{sec:expes}.

\section{Notation and Framework}

\paragraph{Notation}
The $\ell_2$ norm on $\Rset^n$ is $\Vert \cdot \Vert$.
For $p \in [1, \infty]~,$ $\| \cdot \|_p$ is the $\ell_p$ norm.
The Frobenius matrix norm is $\|M\|_F$.
The identity matrix of size $m$ is $\Id_m~.$
$\st$ is the soft-thresholding operator.
Iterations are denoted $z^{(t)}~.$
$\lambda > 0$ is the regularization parameter.
The Lasso cost function is $F_x~.$
$\psi_\alpha(z, x) $ is one iteration of ISTA with step $\alpha$: $\psi_\alpha(z, x)  =\st(z - \alpha D^{\top}(Dz -x), \alpha \lambda)~.$
$\phi_\theta(z, x) $ is one iteration of a LISTA  layer with parameters $\theta= (W, \alpha, \beta)$: $\phi_\theta(z, x) =\st(z - \alpha W^{\top}(Dz -x), \beta \lambda)~.$

The set of integers between 1 and $m$ is $\llbracket 1, m \rrbracket~.$
Given $z\in \Rset^m~,$ the support is $\supp(z) = \{j \in \llbracket 1, m \rrbracket : z_j \neq 0\} \subset \llbracket 1, m \rrbracket~.$
For $S\subset \llbracket0, m\rrbracket$, $D_S\in \Rset^{n \times m}$ is the matrix containing the columns of $D$ indexed by $S$.
We denote $L_S,$ the greatest eigenvalue of $D_S^{\top}D_S$.
The equicorrelation set is $E = \{j \in \llbracket 1, m \rrbracket : \vert D_j^\top(D z^* - x) \vert = \lambda \}$.
The equiregularization set is $\mathcal{B}_{\infty} = \{x\in\Rset^n : \|D^{\top}x\|_{\infty} = 1\} $.
Neural networks parameters are between brackets, \eg{} $\Theta = \{\alpha^{(t)}, \beta^{(t)} \}_{t=0}^{T-1}~.$
The sign function is $\sign(x) = 1$ if $x>0$, $-1$ if $x< 0$ and $0$ is $x=0~.$

\paragraph{Framework}
This paragraph recalls some properties of the Lasso. \autoref{lemma:kkt} gives the first-order optimality conditions for the Lasso.

\begin{lemma}[Optimality for the Lasso]\label{lemma:kkt} The Karush-Kuhn-Tucker (KKT) conditions read
    \begin{equation}
        z^* \in \argmin F_x \Leftrightarrow \forall j \in \llbracket 1 , m \rrbracket, D_j ^\top(x - Dz^*) \in \lambda \partial | z^*_j | =
        \begin{cases}\begin{aligned}
        &\{ \lambda \sign z^*_j \}, &\mathrm{ if \, } z^*_j \neq 0 \enspace, \\
        & [-\lambda, \lambda],  &\mathrm{ if \, } z^*_j = 0 \enspace.
        \end{aligned}\end{cases} \label{eq:KKT}
    \end{equation}
\end{lemma}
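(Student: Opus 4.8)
The plan is to invoke the standard characterization of minimizers of a convex function through its subdifferential, and then make that subdifferential explicit. First I would note that $F_x$ is proper, convex and lower semicontinuous on $\Rset^m$: it is the sum of the smooth convex quadratic $f(z) \triangleq \frac12 \|x - Dz\|^2$ and the convex term $g(z) \triangleq \lambda \|z\|_1$. Fermat's rule for convex functions then gives the equivalence $z^* \in \argmin F_x \iff 0 \in \partial F_x(z^*)$, so the whole statement reduces to computing $\partial F_x$.

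Next I would split the subdifferential. Since $f$ is differentiable everywhere with $\nabla f(z) = D^\top(Dz - x)$, the sum rule for subdifferentials applies without any constraint-qualification subtlety and yields $\partial F_x(z) = \nabla f(z) + \partial g(z)$. It remains to describe $\partial g$. Because $\|z\|_1 = \sum_{j=1}^m |z_j|$ is separable across coordinates, its subdifferential is the Cartesian product $\partial \|z\|_1 = \prod_{j=1}^m \partial |z_j|$, and the scalar subdifferential of the absolute value is the classical $\partial |t| = \{\sign t\}$ for $t \neq 0$ and $\partial |0| = [-1,1]$. Hence, coordinatewise, $\lambda \partial \|z\|_1$ equals $\{\lambda \sign z_j\}$ when $z_j \neq 0$ and $[-\lambda,\lambda]$ when $z_j = 0$.

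Combining the two steps, $0 \in \partial F_x(z^*)$ is equivalent to $-\nabla f(z^*) \in \partial g(z^*)$, that is $D^\top(x - Dz^*) \in \lambda\, \partial \|z^*\|_1$, which read coordinate by coordinate is precisely \eqref{eq:KKT}.

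Since each ingredient — Fermat's rule, the sum rule with a smooth summand, and the subdifferential of $\|\cdot\|_1$ — is standard, there is no genuine obstacle here. The only point deserving a line of care is the additivity $\partial(f+g) = \nabla f + \partial g$, which holds immediately because $f$ is finite and differentiable everywhere, so no Moreau--Rockafellar qualification needs to be checked.
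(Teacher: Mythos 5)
Your proof is correct and is exactly the standard argument (Fermat's rule, the sum rule with a smooth summand, and the separable subdifferential of $\|\cdot\|_1$) that the paper implicitly relies on: the lemma is stated there without proof as a classical fact. Nothing is missing, and your remark that no Moreau--Rockafellar qualification is needed because $f$ is finite and differentiable everywhere is the right point of care.
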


Defining $\lambda_{\max} \triangleq \Vert D^\top x \Vert_\infty~,$ it holds
$\argmin F_x = \{0\} \Leftrightarrow  \lambda \geq \lambda_{\max}~.$
For \emph{some} results in \autoref{sec:steps}, we will need the following assumption on the dictionary $D$:

\begin{assumption}[Uniqueness assumption]\label{assum:uniqueness}
$D$ is such that the solution of Problem~\eqref{eq:lasso} is unique for all $\lambda$ and $x$ i.e. $\argmin F_x = \{z^*\}~.$
\end{assumption}

\Cref{assum:uniqueness} may seem stringent since whenever $m > n~,$ $F_x$ is not strictly convex.
However, it was shown in \citet[Lemma 4]{Tibshirani13} -- with earlier results from \citealt{Rosset_Zhu_Hastie04} --
that if $D$ is sampled from a continuous distribution, \Cref{assum:uniqueness} holds for $D$ with probability one.

\begin{definition}[Equicorrelation set]
    The KKT conditions motivate the introduction of the \emph{equicorrelation set} $E \triangleq \{j \in \llbracket 1, m \rrbracket : \vert D_j^\top(D z^* - x) \vert = \lambda \}~,$ since $j \notin E \implies z^*_j = 0~,$ \ie $E$ contains the support of any solution $z^*~.$

    When \Cref{assum:uniqueness} holds, we have $E = \supp(z^*)$ \citep[Lemma 16]{Tibshirani13}.
\end{definition}

We consider samples $x$ in the \emph{equiregularization} set
\begin{equation}
\mathcal{B}_{\infty} \triangleq \{x\in\Rset^n : \|D^{\top}x\|_{\infty} = 1\} \enspace ,
\end{equation}
which is the set of $x$ such that $\lambda_{\text{max}}(x)=1~.$
Therefore, when $\lambda\ge1~,$ the solution is $z^*(x) = 0$ for all $x\in \mathcal{B}_{\infty}~,$ and when $\lambda < 1~,$ $z^*(x) \neq 0$ for all $x\in \mathcal{B}_{\infty}~.$
For this reason, we assume $0 < \lambda < 1$ in the following.

\section{Better step sizes for ISTA}
\label{sec:steps}
The Lasso objective is the sum of a $L$-smooth function, $\frac{1}{2} \|x - D \cdot\|^2~,$ and a function with an explicit proximal operator, $\lambda \| \cdot \|_1~.$
Proximal gradient descent for this problem, with the sequence of step sizes $(\alpha^{(t)})$ consists in iterating
\begin{equation}
    \label{eq:pgd}
    z^{(t+1)} = \st\left(z^{(t)} - \alpha^{(t)} D^{\top}(Dz^{(t)} - x),
     \lambda \alpha^{(t)} \right) \enspace.
\end{equation}
ISTA follows these iterations with a constant step size $\alpha^{(t)} = 1/L~.$
In the following, denote \mbox{$\psi_\alpha(z, x)\triangleq \st(z - \alpha D^{\top}(Dz^{(t)} - x), \alpha \lambda)$}.
One iteration of ISTA can be cast as a majorization-minimization step~\citep{Beck2009}.
Indeed, for all $z \in \Rset^m~,$
\begin{align}
    F_x(z)  & = \tfrac{1}{2}\|x-D z^{(t)}\|^{2} + (z - z^{(t)})^{\top}D^{\top}(Dz^{(t)} - x) + \tfrac{1}{2} \Vert D (z - z^{(t)}) \Vert^2 + \lambda \|z\|_1 \label{eq:majorization} \\
      & \leq \underbrace{\tfrac{1}{2}\|x-D z^{(t)}\|^{2} + (z - z^{(t)})^{\top}D^{\top}(Dz^{(t)} - x) + \tfrac{L}{2}\|z - z^{(t)}\|^2 + \lambda \|z\|_1}_{\displaystyle \triangleq Q_{x, L}(z, z^{(t)})} \enspace, \label{eq:majorization2}
\end{align}
where we have used the inequality
$(z - z^{(t)})^{\top} D^\top D(z - z^{(t)}) \leq L\|z - z^{(t)}\|^2~.$
The minimizer of $Q_{x, L}(\cdot, z^{(t)})$ is $\psi_{1 / L}(z^{(t)}, x)$, which is the next ISTA step.

\paragraph{Oracle-ISTA: an accelerated ISTA with larger step sizes}
Since the iterates are sparse, this approach can be refined.
For $S\subset\llbracket 1, m \rrbracket~,$ let us define the $S$-smoothness of $D$ as
\begin{equation}
    \label{eq:spca}
    L_S \triangleq \max_z z^{\top}D^\top D z, \enspace \text{s.t.}  \enspace \|z\|=1 \text{ and supp}(z) \subset S \enspace,
\end{equation}
with the convention $L_\emptyset = L~.$
Note that $L_S$ is the greatest eigenvalue of $D_S^\top D_S$ where $D_S \in \Rset^{n \times |S|}$ is the columns of $D$ indexed by $S~.$
For all $S~,$ $L_S \leq L~,$ since $L$ is the solution of~\Cref{eq:spca} without support constraint.
Assume $\supp(z^{(t)})\subset S~.$
Combining \Cref{eq:majorization,eq:spca}, we have
\begin{equation}
    \label{eq:majspca}
    \forall z \enspace \text{s.t.} \enspace \supp(z) \subset S, \enspace F_x(z) \leq  Q_{x, L_S}(z, z^{(t)}) \enspace.
\end{equation}

The minimizer of the r.h.s is $z = \psi_{{1}/{L_S}}(z^{(t)}, x)~.$
Furthermore, the r.h.s. is a tighter upper bound than the one given in~\Cref{eq:majorization2} (see illustration in~\Cref{fig:surrogate}).
Therefore, using $z^{(t+1)} = \psi_{{1}/{L_S}}(z^{(t)}, x)$ minimizes a tighter upper bound, provided that the following condition holds
\begin{condition}
    \label{cond:star}\tag{$\star$}
    \supp(z^{(t+1)})\subset S~.
\end{condition}%

\begin{figure}[ht!]
\begin{minipage}{.38\textwidth}
    \includegraphics[width=\textwidth]{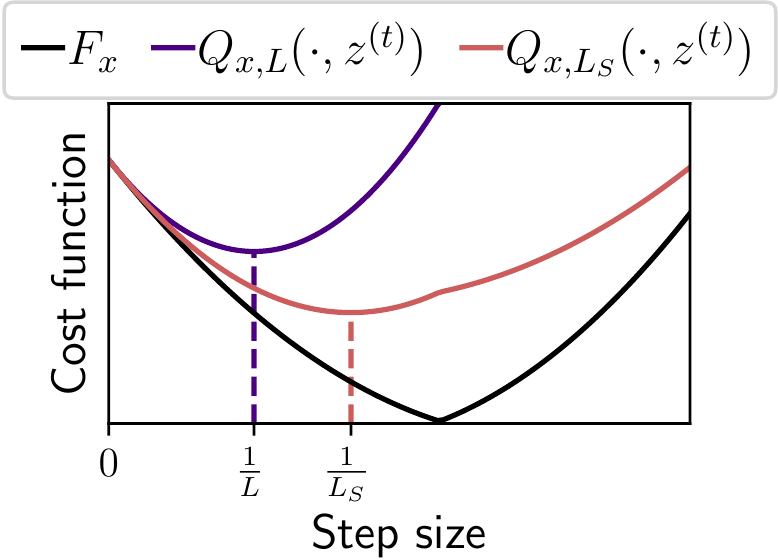}
\end{minipage}%
\begin{minipage}{.55\textwidth}
    \caption{Majorization illustration. If $z^{(t)}$ has support $S~,$ $Q_{x, L_S}(\cdot, z^{(t)})$ is a tighter upper bound of $F_x$ than $Q_{x, L}(\cdot, z^{(t)})$ on the set of points of support $S~.$}
    \label{fig:surrogate}
\end{minipage}
\end{figure}%
\begin{algorithm}[t]
\SetAlgoLined
\KwIn{ Dictionary $D~,$ target $x~,$ number of iterations $T$}
$z^{(0)} = 0$ \\
\For{$t=0, \ldots, T-1$}{
Compute $S = \supp(z^{(t)})$ and  $L_S$ using an oracle  ;\\
Set $y^{(t+1)} = \psi_{{1}/{L_S}}(z^{(t)}, x)$ ;\\
\lIf{\autoref{cond:star} : $\supp(y^{(t+1)}) \subset S$ \label{a}}{
Set $z^{(t+1)} = y^{(t+1)}$
}
{
\lElse{
Set $z^{(t+1)} = \psi_{{1}/{L}}(z^{(t)}, x)$
}
}
}
\KwOut{Sparse code $z^{(T)}$}
 \caption{Oracle-ISTA (OISTA) with larger step sizes \label{algo:oista}}
\end{algorithm}
Oracle-ISTA (OISTA) is an accelerated version of ISTA which leverages the sparsity of the iterates in order to use larger step sizes.
The method is summarized in \Cref{algo:oista}.
OISTA computes $y^{(t+1)} = \psi_{1/L_s}(z^{(t)}, x)~,$ using the larger step size $1 / L_S~,$ and checks if it satisfies the support \autoref{cond:star}.
When the condition is satisfied, the step can be safely accepted. In particular~\Cref{eq:majspca} yields $F_x(y^{(t+1)}) \leq F_x(z^{(t)})~.$
Otherwise, the algorithm falls back to the regular ISTA iteration with the smaller step size.
Hence, each iteration of the algorithm is guaranteed to decrease $F_x~.$
The following proposition shows that OISTA converges in iterates, achieves finite support identification, and eventually reaches a safe regime where \autoref{cond:star} is always true.

\begin{restatable}[Convergence, finite-time support identification and safe regime]{proposition}{cvgfinite}
\label{prop:cvgfinite}
When \Cref{assum:uniqueness} holds, the sequence $(z^{(t)})$ generated by the algorithm converges to $z^* =\argmin F_x~.$

Further, there exists an iteration $T^*$ such that for $t\geq T^*~,$ $\supp(z^{(t)}) = \supp(z^*) \triangleq S^*$ and \autoref{cond:star} is always statisfied.
\end{restatable}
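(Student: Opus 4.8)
The plan is to establish the three claims — convergence of the iterates, finite-time support identification, and entry into the safe regime — in that logical order, since each one feeds the next. First I would argue that the algorithm is a valid descent method: by construction, every accepted step uses the majorization inequality \eqref{eq:majspca} (when \autoref{cond:star} holds) or the classical ISTA majorization \eqref{eq:majorization2} (otherwise), so in both cases $F_x(z^{(t+1)}) \le F_x(z^{(t)})$, and in fact one gets a sufficient decrease of the form $F_x(z^{(t)}) - F_x(z^{(t+1)}) \ge \tfrac{c}{2}\|z^{(t+1)} - z^{(t)}\|^2$ for some $c > 0$ (with $c = L_S$ or $c = L$, hence $c \ge$ the smallest nonzero $L_S$, which is bounded below). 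Summing telescopically and using that $F_x \ge 0$ gives $\|z^{(t+1)} - z^{(t)}\| \to 0$. Since the iterates are bounded (sublevel sets of $F_x$ are bounded because $\lambda\|\cdot\|_1$ is coercive on... well, one needs a word here: $F_x$ is coercive thanks to the $\ell_1$ term, so all iterates stay in a compact set), any limit point $\bar z$ is a fixed point of the relevant iteration map; combined with $\|z^{(t+1)}-z^{(t)}\|\to0$ and the fact that a fixed point of $\psi_{1/L}(\cdot,x)$ is exactly a point satisfying the KKT conditions of \autoref{lemma:kkt}, every limit point is a minimizer. Under \Cref{assum:uniqueness} the minimizer is unique, so the whole sequence converges to $z^*$.

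Next, for support identification I would use the KKT characterization and the strict complementarity that \Cref{assum:uniqueness} buys us. Since $E = \supp(z^*) = S^*$ under the uniqueness assumption, for every $j \notin S^*$ we have $|D_j^\top(x - Dz^*)| < \lambda$ strictly, and for every $j \in S^*$ we have $|z^*_j| > 0$ strictly. Because $z^{(t)} \to z^*$ and the map $z \mapsto D^\top(x - Dz)$ is continuous, for $t$ large enough $|D_j^\top(x - Dz^{(t)})| < \lambda$ for all $j \notin S^*$ and $|z^{(t)}_j| > 0$ for all $j \in S^*$. For the ISTA-type update with step $\alpha \in [1/L, 1/L_{S^*}]$, the $j$-th coordinate of the argument of $\st$ is $z^{(t)}_j - \alpha D_j^\top(Dz^{(t)} - x)$; when $z^{(t)}_j = 0$ (as happens for the relevant coordinates once we are close to $z^*$) this equals $\alpha D_j^\top(x - Dz^{(t)})$, whose absolute value is $< \alpha\lambda$ for $j\notin S^*$, so soft-thresholding at level $\alpha\lambda$ kills it; hence $\supp(z^{(t+1)}) \subseteq S^*$. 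Conversely the coordinates in $S^*$ stay nonzero for $t$ large by continuity and the strict positivity of $|z^*_j|$ — this needs a small argument that the update is a contraction-like map near $z^*$ on those coordinates, or alternatively one invokes that once $\supp(z^{(t)})\subseteq S^*$ the iteration restricted to $S^*$ is ordinary proximal gradient on a strictly convex (on $S^*$) problem converging to $z^*$ with $z^*_{S^*}$ having full support. So there is $T^*$ with $\supp(z^{(t)}) = S^*$ for all $t \ge T^*$.

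Finally, the safe regime: once $\supp(z^{(t)}) = S^* \triangleq S$, the oracle sets $S = S^*$ and computes $L_{S^*}$, proposes $y^{(t+1)} = \psi_{1/L_{S^*}}(z^{(t)}, x)$, and I must check \autoref{cond:star}, i.e. $\supp(y^{(t+1)}) \subseteq S^*$. The argument of the soft-threshold in coordinate $j \notin S^*$ is again $\tfrac{1}{L_{S^*}} D_j^\top(x - Dz^{(t)})$ (using $z^{(t)}_j = 0$), of absolute value $< \tfrac{1}{L_{S^*}}\lambda$ for $t$ large by the strict inequality $|D_j^\top(x-Dz^*)|<\lambda$ and continuity, so these coordinates are thresholded to zero: \autoref{cond:star} holds and the step is accepted. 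Enlarging $T^*$ if necessary covers both the support-identification and the safe-regime requirements simultaneously.

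The main obstacle I anticipate is the lower-bound-on-decrease / limit-point argument being clean: specifically, ensuring that whichever step is taken, the decrease constant $c$ is uniformly bounded away from $0$ (the set of possible supports is finite, so $\min_S L_S > 0$ works, but one should state it), and that a limit point of the mixed iteration — which alternates between two maps depending on whether \autoref{cond:star} holds — is genuinely a KKT point. The cleanest route is probably: the decrease estimate forces $\|z^{(t+1)}-z^{(t)}\|\to 0$; since only finitely many coordinates and supports are involved one can pass to a subsequence along which the "branch taken" is constant and the support is constant, apply continuity of the corresponding fixed-point equation, and conclude. The support-persistence direction (coordinates in $S^*$ not vanishing) is the other delicate point and is best handled by noting that after identification the algorithm's iteration reduces to a standard convergent proximal-gradient scheme on the reduced problem, where non-vanishing of $z^*_{S^*}$ is exactly strict complementarity from \Cref{assum:uniqueness}.
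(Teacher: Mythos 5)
Your proposal is correct and follows essentially the same route as the paper: a sufficient-decrease argument with a uniform lower bound on the step constants (the paper packages the limit-point step as Zangwill's global convergence theorem) plus \Cref{assum:uniqueness} for convergence, then strict complementarity ($E = \supp(z^*)$ under the uniqueness assumption) and continuity of the soft-thresholding update to obtain finite-time support identification and the validity of \autoref{cond:star} for the step $1/L_{S^*}$. One cosmetic point: in the identification step you neither need nor may assume $z_j^{(t)}=0$ for $j\notin S^*$ before identification has occurred; it suffices that the argument of $\st$ converges to $-\alpha D_j^\top(Dz^*-x)$, of modulus strictly below $\alpha\lambda$, because $z_j^*=0$ --- which is how the paper argues via its linearization lemma for the soft-thresholding.
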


\begin{proof}[Sketch of proof (full proof in \autoref{sub:proof:cvgfinite})]
Using Zangwill's global convergence theorem~\citep{Zangwill69}, we show that all accumulation points of $(z^{(t)})$ are solutions of Lasso. Since the solution is assumed unique, $(z^{(t)})$ converges to $z^*~.$
Then, we show that the algorithm achieves finite-support identification with a technique inspired by~\citet{Hale_Yin_Zhang2008}. The algorithm gets arbitrary close to $z^*~,$ eventually with the same support.
We finally show that in a neighborhood of $z^*~,$ the set of points of support $S^*$ is stable by $\psi_{1/L_S}(\cdot, x)~.$
The algorithm eventually reaches this region, and then \autoref{cond:star} is true.
\end{proof}
It follows that the algorithm enjoys the usual ISTA convergence results replacing $L$ with $L_{S^*}~.$
\begin{restatable}[Rates of convergence]{proposition}{sublinear}
\label{prop:rates}
For $t > T^*~,$ $F_x(z^{(t)}) - F_x(z^*) \leq L_{S^*} \frac{\|z^{*} - z^{(T^*)}\|^2}{2(t - T^*)}~.$\\
If additionally $\inf_{\|z\|=1} \|D_{S^*}z\|^2 = \mu^* > 0~,$ then the convergence rate for $t \geq T^*$ is\\
$F_x(z^{(t)}) - F_x(z^*) \leq (1 - \tfrac{\mu^*}{L_{S^*}})^{t - T^*}(F_x(z^{(T^*)}) - F_x(z^*))~.$
\end{restatable}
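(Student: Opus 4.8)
The plan is to exploit \autoref{prop:cvgfinite} to reduce the problem to a classical ISTA analysis on a smaller problem. For $t \geq T^*$ the iterates $z^{(t)}$ are all supported on $S^* = \supp(z^*)$ and \autoref{cond:star} holds at every step, so the OISTA update degenerates to $z^{(t+1)} = \psi_{1/L_{S^*}}(z^{(t)}, x)$. First I would observe that, because soft-thresholding acts coordinatewise, this iteration leaves the subspace $V \triangleq \{z \in \Rset^m : \supp(z) \subset S^*\}$ invariant and, on $V$, coincides exactly with proximal gradient descent applied to the reduced objective $u \mapsto \frac{1}{2}\|x - D_{S^*} u\|^2 + \lambda\|u\|_1$ over $u \in \Rset^{|S^*|}$, with constant step size $1/L_{S^*}$. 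The gradient of the smooth part is $u \mapsto D_{S^*}^{\top}(D_{S^*}u - x)$, whose Lipschitz constant is exactly $L_{S^*}$ by the definition in \Cref{eq:spca}.

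Next I would note that $z^*$ is the minimizer of the reduced objective: it is the global minimizer of $F_x$ and it belongs to $V$ since $\supp(z^*) = S^*$, hence it is a fortiori optimal among points of $V$. Therefore, for $t \geq T^*$, $(z^{(t)})$ is precisely the sequence produced by ISTA on this $S^*$-restricted Lasso, initialized at $z^{(T^*)}$, on a problem whose smoothness constant is $L_{S^*}$. Plugging this into the classical sublinear convergence bound for ISTA \citep{Beck2009} with $k = t - T^*$ iterations gives
\[
F_x(z^{(t)}) - F_x(z^*) \leq \frac{L_{S^*}\,\|z^{(T^*)} - z^*\|^2}{2(t - T^*)} \enspace.
\]

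For the linear rate, the extra hypothesis $\inf_{\|z\|=1}\|D_{S^*}z\|^2 = \mu^* > 0$ says precisely that $D_{S^*}^{\top}D_{S^*} \succeq \mu^* \Id_{|S^*|}$, i.e.\ the smooth part of the reduced objective is $\mu^*$-strongly convex while remaining $L_{S^*}$-smooth. Invoking the standard linear convergence analysis of proximal gradient descent for a strongly convex smooth term plus a convex term, with step $1/L_{S^*}$, yields the geometric rate
\[
F_x(z^{(t)}) - F_x(z^*) \leq \Bigl(1 - \frac{\mu^*}{L_{S^*}}\Bigr)^{t - T^*} \bigl(F_x(z^{(T^*)}) - F_x(z^*)\bigr) \enspace.
\]

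The only point requiring care — and the step I expect to be the mild obstacle — is making the reduction rigorous: checking cleanly that once the support has stabilized the full-space OISTA iteration is identical to ISTA on the $S^*$-restricted Lasso, so that the textbook guarantees transfer verbatim with $L$ replaced by $L_{S^*}$ (and, for the second bound, with the restricted strong convexity constant $\mu^*$). Everything after that reduction is a direct application of the classical ISTA rates.
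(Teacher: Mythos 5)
Your proposal is correct and follows exactly the same route as the paper: after $T^*$, by \autoref{prop:cvgfinite} the OISTA iteration coincides with ISTA on the Lasso restricted to the support $S^*$, whose smooth part is $L_{S^*}$-smooth (and $\mu^*$-strongly convex under the extra hypothesis), so the classical proximal gradient rates of \citet{Beck2009} apply with the improved constants. The paper's appendix proof is in fact terser than yours; your added care about the invariance of the subspace $\{z : \supp(z)\subset S^*\}$ and the identification of $z^*$ as the minimizer of the reduced objective only makes the reduction more explicit.
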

\begin{proof}[Sketch of proof (full proof in \autoref{sub:proof:rates})]
After iteration $T^*~,$ OISTA is equivalent to ISTA applied on $F_x(z)$ restricted to $z \in S^*~.$ This function is $L_{S^*}$-smooth, and $\mu^*$-strongly convex if $\mu^*>0~.$ Therefore, the classical ISTA rates apply with improved condition number.
\end{proof}

These two rates are tighter than the usual ISTA rates -- in the convex case $ L \frac{\|z^{*}\|^2}{2t}$ and in the $\mu$-strongly convex case $(1 - \frac{\mu^*}{L})^{t}(F_x(0) - F_x(z^*))$~\citep{Beck2009}.
Finally, the same way ISTA converges in one iteration when $D$ is orthogonal ($D^{\top}D=\Id_m$), OISTA converges in one iteration if $S^*$ is identified and $D_{S^*}$ is orthogonal.
\begin{proposition}
    \label{prop:orthogonal}
    Assume $D_{S^*}^{\top}D_{S^*} =L_{S^*} \Id_{|S^*|}~.$
    Then, $z^{(T^*+1)} = z^*~.$
\end{proposition}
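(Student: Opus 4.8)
The plan is to exploit the characterization of an OISTA iteration as the minimizer of the majorant $Q_{x,L_{S^*}}(\cdot, z^{(T^*)})$ over vectors supported on $S^*$, and to observe that under the orthogonality assumption this majorant actually coincides with $F_x$ on that subspace — so its minimizer is exactly $z^*$. First I would invoke Proposition~\ref{prop:cvgfinite} to fix an iteration $T^*$ after which $\supp(z^{(t)}) = S^*$ and \autoref{cond:star} holds; in particular $\supp(z^{(T^*)}) \subset S^*$, so the step taken at iteration $T^*$ is $z^{(T^*+1)} = y^{(T^*+1)} = \psi_{1/L_{S^*}}(z^{(T^*)}, x)$, and this iterate is again supported on $S^*$.

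Next I would write out the quadratic term in the majorant restricted to the subspace $\{z : \supp(z)\subset S^*\}$. For such $z$ and with $\supp(z^{(T^*)})\subset S^*$, we have $D(z - z^{(T^*)}) = D_{S^*}(z_{S^*} - z^{(T^*)}_{S^*})$, so
\begin{equation*}
\tfrac12\|D(z - z^{(T^*)})\|^2 = \tfrac12 (z_{S^*} - z^{(T^*)}_{S^*})^\top D_{S^*}^\top D_{S^*}(z_{S^*} - z^{(T^*)}_{S^*}) = \tfrac{L_{S^*}}{2}\|z - z^{(T^*)}\|^2 \enspace,
\end{equation*}
using exactly the hypothesis $D_{S^*}^\top D_{S^*} = L_{S^*}\Id_{|S^*|}$. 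Plugging this into the exact identity \eqref{eq:majorization}, we get that for every $z$ supported on $S^*$, $F_x(z) = Q_{x,L_{S^*}}(z, z^{(T^*)})$: the "upper bound" is tight on the whole subspace, not just at $z^{(T^*)}$.

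Now I would combine the two observations. On one hand, $z^{(T^*+1)} = \psi_{1/L_{S^*}}(z^{(T^*)},x)$ is by definition the minimizer of $Q_{x,L_{S^*}}(\cdot, z^{(T^*)})$ over all of $\Rset^m$, and one checks it is supported on $S^*$ (it is, by the safe-regime property, but also directly because soft-thresholding cannot create coordinates outside $\supp(z^{(T^*)})\cup\{$ coords where the gradient step is large enough $\}$ — here it suffices that \autoref{cond:star} holds). On the other hand, $z^*$ is supported on $S^* = \supp(z^*)$ (Proposition~\ref{prop:cvgfinite}, together with \Cref{assum:uniqueness}, which is implicit once $z^*$ is well-defined), so by the identity above, minimizing $Q_{x,L_{S^*}}(\cdot, z^{(T^*)})$ over vectors supported on $S^*$ is the same as minimizing $F_x$ over that subspace, whose unique minimizer is $z^*$. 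Since the unconstrained minimizer $z^{(T^*+1)}$ of $Q_{x,L_{S^*}}(\cdot, z^{(T^*)})$ happens to land in that subspace, it must coincide with the constrained minimizer, i.e. $z^{(T^*+1)} = z^*$.

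The only delicate point is making rigorous the claim that the soft-thresholding step keeps the support inside $S^*$ so that the minimization of the (globally quadratic-plus-$\ell_1$) surrogate can be restricted to the subspace; this is essentially a restatement of the safe regime from Proposition~\ref{prop:cvgfinite}, so no new work is needed — one simply has to phrase it as: since \autoref{cond:star} holds at $T^*$, $\supp(z^{(T^*+1)})\subset S^*$, and a vector supported on $S^*$ that minimizes $Q_{x,L_{S^*}}(\cdot,z^{(T^*)})$ globally also minimizes it over the subspace, hence equals $z^*$. Everything else is the short algebraic identity above.
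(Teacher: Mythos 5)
Your proof is correct and follows essentially the same route as the paper's one-line argument: under the hypothesis $D_{S^*}^\top D_{S^*} = L_{S^*}\Id_{|S^*|}$, the surrogate $Q_{x,L_{S^*}}(\cdot, z^{(T^*)})$ coincides with $F_x$ on vectors supported on $S^*$, so the OISTA step, which minimizes the surrogate and stays supported on $S^*$ by the safe regime, minimizes $F_x$ there and hence equals $z^*$. You simply make explicit the details the paper leaves implicit (the support of the unconstrained surrogate minimizer, and the identification of the constrained minimizer of $F_x$ with $z^*$ via \Cref{assum:uniqueness}), which is a faithful expansion rather than a different approach.
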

\begin{proof}
    For $z$ \emph{s.t.} $\supp(z) = S^*~,$ $F_x(z) = Q_{x, L_S}(z, z^{(T^*)})~.$ Hence, the OISTA step minimizes $F_x ~.$
\end{proof}

\paragraph{Quantification of the rates improvement in a Gaussian setting}

The following proposition gives an asymptotic value for $\frac{L_S}{L}$ in a simple setting.
\begin{proposition}
    \label{prop:eig}
    Assume that the entries of $D \in \Rset^{n \times m}$ are i.i.d centered Gaussian variables with variance $1~.$
    Assume that $S$ consists of $k$ integers chosen uniformly at random in $\llbracket 1, m \rrbracket~.$
    Assume that $k, m, n \rightarrow +\infty$ with linear ratios $m/n \rightarrow \gamma, \enspace k /m \rightarrow \zeta \enspace .$ Then
    \begin{equation}
        \label{eq:ratioL}
        \frac{L_S}{L} \rightarrow \left(\frac{1 + \sqrt{\zeta\gamma}}{1 + \sqrt{\gamma}} \right)^2 \enspace.
    \end{equation}
\end{proposition}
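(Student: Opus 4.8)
The plan is to recognize both $L_S$ and $L$ as largest eigenvalues of Wishart-type matrices and apply the Marchenko–Pastur law (or more precisely the Bai–Yin theorem on the edge of the spectrum). Write $D_S \in \Rset^{n\times k}$ for the submatrix of columns indexed by $S$. Because the entries of $D$ are i.i.d.\ centered Gaussian with unit variance, the distribution of $D_S$ is the same as that of any $n\times k$ Gaussian matrix — the uniformly-random choice of $S$ is irrelevant by exchangeability of the columns. Then $L_S = \|D_S\|_{\mathrm{op}}^2 = \sigma_{\max}(D_S)^2$ and $L = L_{\llbracket 1,m\rrbracket} = \sigma_{\max}(D)^2$, where $D$ itself is $n\times m$ Gaussian.

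The key step is the classical result that for an $n\times p$ matrix with i.i.d.\ unit-variance entries (finite fourth moment suffices; Gaussian is fine), with $p/n \to c$, one has $\sigma_{\max}/\sqrt{n} \to 1 + \sqrt{c}$ almost surely (Bai–Yin). I would apply this twice. For $D_S$: the aspect ratio is $k/n = (k/m)(m/n) \to \zeta\gamma$, so $\sigma_{\max}(D_S)/\sqrt n \to 1 + \sqrt{\zeta\gamma}$, hence $L_S/n \to (1+\sqrt{\zeta\gamma})^2$. For $D$: the aspect ratio is $m/n \to \gamma$, so $\sigma_{\max}(D)/\sqrt n \to 1+\sqrt\gamma$, hence $L/n \to (1+\sqrt\gamma)^2$. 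Taking the ratio, the $n$ cancels and
\begin{equation}
    \frac{L_S}{L} \to \left(\frac{1+\sqrt{\zeta\gamma}}{1+\sqrt\gamma}\right)^2 \enspace,
\end{equation}
which is the claim. The convergence is almost sure; if one only wants convergence in probability or in expectation of the ratio, that follows a fortiori, with a small argument to pass from a.s.\ convergence of each factor to convergence of the quotient (the denominator is bounded away from $0$).

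The main obstacle — really the only subtlety — is justifying that $S$ being a uniformly random size-$k$ subset does not affect the limit: one must note that conditionally on $|S|=k$, the matrix $D_S$ has exactly the law of an $n\times k$ i.i.d.\ Gaussian matrix, so the Bai–Yin edge result applies verbatim, and the limit does not depend on the particular realization of $S$. A secondary point is bookkeeping with the three interlinked ratios: one should make sure $k/n \to \zeta\gamma$ is the correct induced limit (it is, since $k/n = (k/m)\cdot(m/n)$) and that all three indices going to infinity "with linear ratios" is exactly the regime in which Bai–Yin/Marchenko–Pastur is stated. Everything else is a direct substitution.
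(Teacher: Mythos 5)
Your proposal is correct and matches the paper's approach: the paper simply states that the result is "a direct application of the Marchenko--Pastur law" and gives no further detail. Your write-up fills in exactly the intended argument, and is in fact slightly more careful than the paper in noting that convergence of the \emph{largest} eigenvalue to the edge $(1+\sqrt{c})^2$ requires the Bai--Yin theorem rather than the bulk Marchenko--Pastur law alone, and in handling the exchangeability of the random subset $S$.
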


This is a direct application of the Marchenko-Pastur law~\citep{marvcenko1967distribution}.
The law is illustrated on a toy dataset in \autoref{fig:lip_distrib}.
In \Cref{prop:eig}, $\gamma$ is the ratio between the number of atoms and number of dimensions, and the average size of $S$ is described by $\zeta \leq 1~.$
In an overcomplete setting, we have $\gamma \gg 1~,$ yielding the approximation of \Cref{eq:ratioL}: $L_S \simeq \zeta L~.$
Therefore, if $z^*$ is very sparse ($\zeta \ll 1$), the convergence rates of~\Cref{prop:rates} are much better than those of ISTA.

\paragraph{Example}
\Cref{fig:oista_ista} compares the OISTA, ISTA, and FISTA on a toy problem.
The improved rate of convergence of OISTA is illustrated.
Further comparisons are displayed in \autoref{fig:comparison_oista} for different regularization parameters $\lambda~.$
While this demonstrates a much faster rate of convergence, it requires computing several Lipschitz constants $L_S~,$ which is cumbersome in high dimension.
This motivates the next section, where we propose to~\emph{learn} those steps.

\begin{figure}[h]
\centering
\begin{minipage}{.48\textwidth}
    \includegraphics[width=\textwidth]{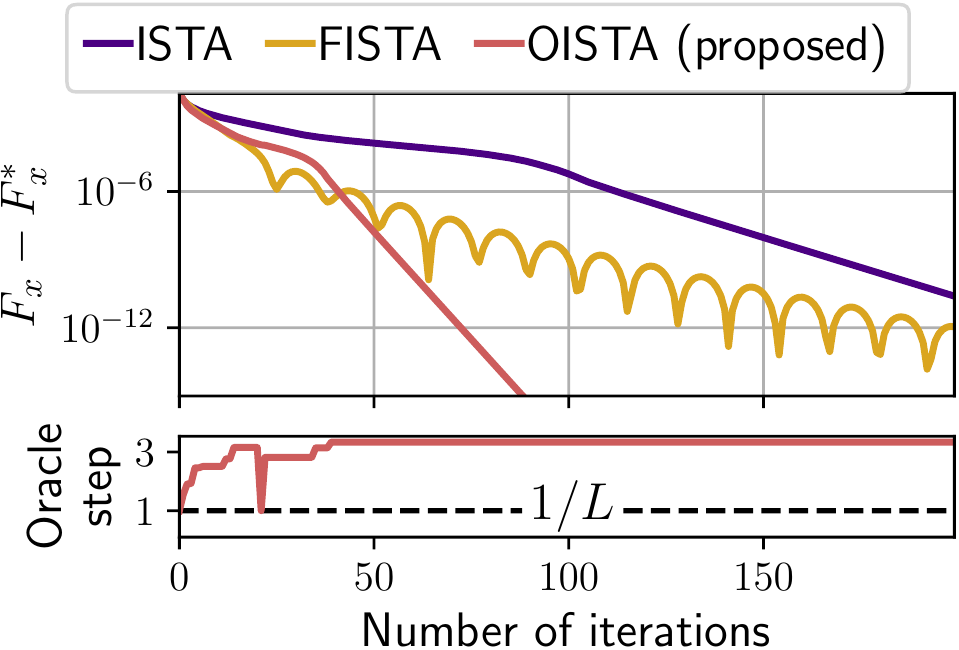}
\end{minipage}%
\hfill
\begin{minipage}{.45\textwidth}
    \caption{Convergence curves of OISTA, ISTA, and FISTA on a toy problem with $n=10~,$ $m=50~,$ $\lambda = 0.5~.$ The bottom figure displays the (normalized) steps taken by OISTA at each iteration. Full experimental setup described in \autoref{sec:supp_expe}.}
    \label{fig:oista_ista}
\end{minipage}
\begin{minipage}{.04\textwidth}
\
\end{minipage}
\end{figure}

\section{Learning unfolded algorithms}
\label{sec:learning}

\paragraph{Network architectures}

At each step, ISTA performs a linear operation to compute an update in the direction of the gradient $D^\top(Dz^{(t)} - x)$  and then an element-wise non linearity with the soft-thresholding operator $\st.$
The whole algorithm can be summarized as a recurrent neural network (RNN), presented in \autoref{fig:network_ista}.
\citet{Gregor2010} introduced Learned-ISTA (LISTA), a neural network constructed by unfolding this RNN $T$ times and learning the weights associated to each layer.
The unfolded network, presented in \autoref{fig:lista}, iterates $z^{(t+1)}=\st(W_x^{(t)}x + W_z^{(t)}z^{(t)}, \lambda\beta^{(t)})~.$
It outputs exactly the same vector as $T$ iterations of ISTA when $W_x^{(t)} = \frac{D^\top}{L}~,$ $W_z^{(t)} = \Id_m - \frac{D^\top D}{L}$ and $\beta^{(t)} = \frac{1}{L}~.$
Empirically, this network is able to output a better estimate of the sparse code solution with fewer operations.

\begin{figure}[hbtp!]
    \begin{subfigure}[b]{.4\textwidth}
        \centering
        \scalebox{1}{\begin{tikzpicture}[scale=1]

\tikzset{
    >=stealth',
    varstyle/.style={
           rectangle,
           fill=white,
           rounded corners,
           draw=black, very thick,
           text width=2em,
           minimum height=2em,
           text centered},
    op/.style={
           circle,
           draw=black, very thick,
           fill=white,
           text width=1em,
           minimum height=1em,
           text centered},
    st/.style={
           draw,
           thick,
           rectangle, 
           fill=white,
           text width=2em, 
           minimum height=2.5em,},
    pil/.style={
           ->,
           thick,
    }
};

\node[varstyle] (linear) {$W_x$};
\node[op, inner sep=4pt, right=1em of linear] (add) {};
\node[left=1em of linear] (X) {$x$} edge[pil] (linear.west);
\path (linear.east) edge[pil] (add.west);


\draw ($ (add.north) - (0, .15)$) -- ($ (add.south) + (0, .15)$);
\draw ($ (add.east) - (.15, 0)$) -- ($ (add.west) + (.15, 0)$);

\node[st, right=1em of add] (h) {};
\node[above=1em of h.center] (up) {};
\node[below=1em of h.center] (down) {};
\node[right=1em of h.center] (right) {};
\node[left=1em of h.center] (left) {};
\draw (h.center) -- (up) -- (down) -- (h.center) -- (right) -- (left);
\draw ($ (h.center) - (.2em,0em)  $) -- ($ (h.center) - (.8em,.7em)  $);
\draw ($ (h.center) + (.2em,0em)  $) -- ($ (h.center) + (.8em,.7em)  $);
\node[right=2em of h] (Z) {$z^*$};
\path (add.east)
edge[pil] (h.west);
\path (h.east)
edge[pil] (Z);
\node[inner sep=0,minimum size=0,right=1em of h] (branch) {};
\node[varstyle,below=.5em of h] (S) {$W_z$};
\draw[pil] (branch) |- (S.east);
\draw[pil] (S.west) -| (add);
\end{tikzpicture}}
        \caption{\textbf{ISTA} - Recurrent Neural Network}
        \label{fig:network_ista}
    \end{subfigure}
    \begin{subfigure}[b]{.55\textwidth}
        \scalebox{.8}{\begin{tikzpicture}[scale=1]

\tikzset{
    >=stealth',
    varstyle/.style={
           rectangle,
           fill=white,
           rounded corners,
           draw=black, very thick,
           text width=2em,
           minimum height=2em,
           text centered},
    op/.style={
           circle,
           draw=black, very thick,
           fill=white,
           text width=.5em,
           minimum height=.3em,
           text centered},
    st/.style={
           draw,
           thick,
           rectangle, 
           fill=white,
           text width=1.5em, 
           minimum height=2em,},
    pil/.style={
           ->,
           thick,
    }
};

\def\nlayer{2}

\node (X) {$x$};
\node[right=1em of X] (linear) {}; 
\node[below=3.7em of linear] (p) {};
\node[varstyle, above=1.45em of p] (B) {$W_x^{(0)}$};

\foreach \i in {1,...,\nlayer}{
    \node[st, right=.9em of p] (sta\i) {};
    \path (p.east) edge[pil] (sta\i.west);

    \node[varstyle, right=1em of sta\i] (S) {$W_z^{(\i)}$};
    \node[op, inner sep=4pt, right=1em of S] (p) {};
    \node[varstyle, above=.8em of p] (B\i) {$W_x^{(\i)}$};
    
    \draw ($ (p.north) - (0, .15)$) -- ($ (p.south) + (0, .15)$);
    \draw ($ (p.east) - (.15, 0)$) -- ($ (p.west) + (.15, 0)$);
    
    \draw ($ (sta\i.center) - (0,.9em)  $) -- ($ (sta\i.center) + (0,.9em)  $);
    \draw ($ (sta\i.center) - (.9em,0em)  $) -- ($ (sta\i.center) + (.9em,0em)  $);
    \draw ($ (sta\i.center) - (.2em,0em)  $) -- ($ (sta\i.center) - (.8em,.7em)  $);
    \draw ($ (sta\i.center) + (.2em,0em)  $) -- ($ (sta\i.center) + (.8em,.7em)  $);
    \path (sta\i.east) edge[pil] (S.west);
    \path (S.east) edge[pil] (p.west);
    \draw[pil] (X) -| (B\i) -- (p.north);
}
\draw[pil] (linear.center) -- (B) |- (sta1.west);

\node[st, right=1em of p] (sta) {};
    \draw ($ (sta.center) - (0,.9em)  $) -- ($ (sta.center) + (0,.9em)  $);
    \draw ($ (sta.center) - (.9em,0em)  $) -- ($ (sta.center) + (.9em,0em)  $);
    \draw ($ (sta.center) - (.2em,0em)  $) -- ($ (sta.center) - (.8em,.7em)  $);
    \draw ($ (sta.center) + (.2em,0em)  $) -- ($ (sta.center) + (.8em,.7em)  $);
    
\pgfmathtruncatemacro{\nlayer}{\nlayer + 1}
\node[right=1em of sta] (Z) {$z^{(\nlayer)}$};
\path (p.east) edge[pil] (sta.west);
\path (sta.east) edge[pil] (Z.west);

\end{tikzpicture}}
        \caption{\textbf{LISTA} - Unfolded network with $T=3$}
        \label{fig:lista}
    \end{subfigure}
    \caption{
        Network architecture for ISTA  (\emph{left}) and LISTA (\emph{right}).}
\end{figure}
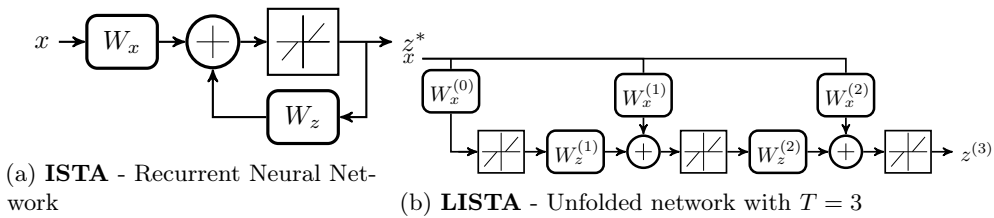

Due to the expression of the gradient, \citet{Chen2018} proposed to consider only a subclass of the previous networks, where the weights $W_x$ and $W_z$ are coupled via $W_z = \Id_m - W_x^\top D~.$
This is the architecture we consider in the following.
A layer of LISTA is a function $\phi_{\theta}: \Rset^m \times \Rset^n \to \Rset^m$ parametrized by $\theta = (W, \alpha, \beta) \in \Rset^{n \times m} \times \Rset^+_* \times \Rset^+_*$ such that
\begin{equation}
    \label{eq:lista_layer}
    \phi_{\theta}(z, x) = \st(z - \alpha W^{\top}(Dz -x), \beta\lambda) \enspace.
\end{equation}
Given a set of $T$ layer parameters $\Theta^{(T)} = \{\theta^{(t)}\}_{t=0}^{T-1}~,$ the LISTA network $\Phi_{\Theta^{(T)}}: \Rset^n \to \Rset^m$ is $\Phi_{\Theta^{(T)}}(x) = z^{(T)}(x)$ where $z^{(t)}(x)$ is defined by recursion
\begin{equation}
    z^{(0)}(x) = 0, ~~\text{and } ~~~
    z^{(t+1)}(x) = \phi_{\theta^{(t)}}(z^{(t)}(x), x)
    ~~\text{ for }~t\in\llbracket0, T-1\rrbracket\enspace .
\end{equation}
Taking $W=D~,$ $\alpha = \beta = \frac{1}{L}$ yields the same outputs as $T$ iterations of ISTA.

To alleviate the need to learn the large matrices $W^{(t)},$ \citet{Liu2019} proposed to use a shared analytic matrix $W_{\text{ALISTA}}$ for all layers.
The matrix is computed in a preprocessing stage by
\begin{equation}
    \label{eq:alista}
    W_{\text{ALISTA}} = \argmin_W \|W^\top D\|_F^2~~~~ s.t.~~~~ \text{diag}(W^\top D) = {\pmb 1}_m \enspace .
\end{equation}
Then, only the parameters $(\alpha^{(t)}, \beta^{(t)})$ are learned.
This effectively reduces the number of parameters from $(n m + 2)\times T$ to $2 \times T~.$
However, we will see that ALISTA fails in our setup.

\paragraph{Step-LISTA}
With regards to the study on step sizes for ISTA in \autoref{sec:steps}, we propose to \emph{learn} approximation of ISTA step sizes for the input distribution using the LISTA framework.
The resulting network, dubbed Step-LISTA (SLISTA), has $T$ parameters $\Theta_{\text{SLISTA}}=\{\alpha^{(t)}\}_{t=0}^{T-1}~,$ and follows the iterations:
\begin{equation}
    \label{eq:steprec}
    z^{(t+1)}(x) = \st(z^{(t)}(x) - \alpha^{(t)}D^{\top}(Dz^{(t)}(x) -x), \alpha^{(t)}\lambda) \enspace .
\end{equation}
This is equivalent to a coupling in the LISTA parameters: a LISTA layer $\theta = (W, \alpha, \beta)$ corresponds to a SLISTA layer if and only if $ \frac{\alpha}{\beta}W = D$.
This network aims at learning good step sizes, like the ones used in OISTA, without the computational burden of computing Lipschitz constants.
The number of parameters compared to the classical LISTA architecture $\Theta_{\text{LISTA}}$ is greatly diminished, making the network easier to train. Learning curves are shown in \autoref{fig:learning_curve} in appendix.
\autoref{fig:learned_steps} displays the learned steps of a SLISTA network on a toy example. The network learns larger step-sizes as the $1/L_S$'s increase.

\begin{figure}[hbtp!]
    \centering
    \begin{minipage}{.38\textwidth}
        \includegraphics[width=\textwidth]{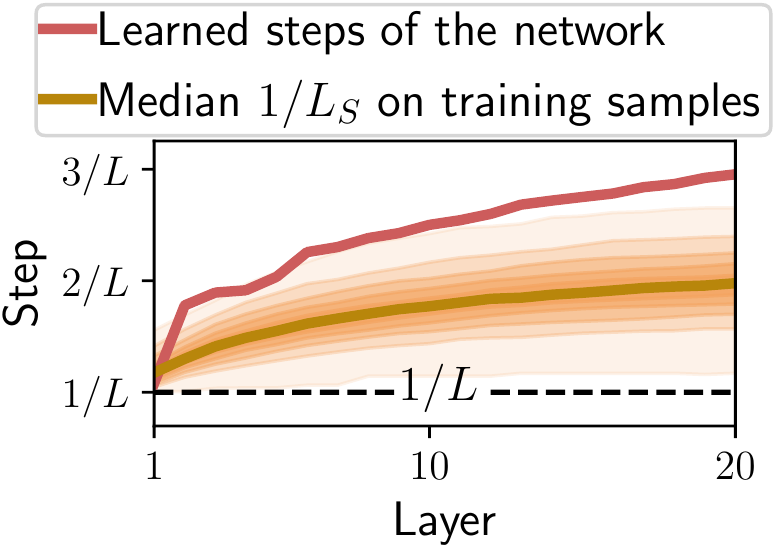}
    \end{minipage}%
    \hfill
    \begin{minipage}{.57\textwidth}
        \caption{Steps learned with a $20$ layers SLISTA network on a $10 \times 20$ problem. For each layer $t$ and each training sample $x$, we compute the support $S(x, t)$ of $z^{(t)}(x)$. The brown curves display the quantiles of the distribution of $1/L_{S(x, t)}$ for each layer $t~$. Full experimental setup described in \autoref{sec:supp_expe}.}
        \label{fig:learned_steps}
    \end{minipage}
\end{figure}

\paragraph{Training the network}
We consider the framework where the network learns to solve the Lasso on $\mathcal{B}_{\infty}$ in an \emph{unsupervised} way.
Given a distribution $p$ on $\mathcal{B}_{\infty}~,$ the network is trained by solving
\begin{equation}
    \label{eq:unsupervised}
    \tilde{\Theta}^{(T)} \in \arg\min_{\Theta^{(T)}} \mathcal{L}(\Theta^{(T)}) \triangleq \mathbb{E}_{x\sim p}[F_x(\Phi_{\Theta^{(T)}}(x))] \enspace .
\end{equation}
Most of the literature on learned optimization train the network with a different \emph{supervised} objective \citep{Gregor2010, Xin2016, Chen2018, Liu2019}. Given a set of pairs $(x^{i}, z^{i})~,$ the supervised approach tries to learn the parameters of the network such that $\Phi_\Theta(x^i) \simeq z^i$ \eg{} by minimizing $\|\Phi_\Theta(x^i) - z^i\|^2~.$
This training procedure differs critically from ours. For instance, ISTA does not converge for the supervised problem in general while it does for the unsupervised one. As \autoref{prop:pointwise_conv} shows, the unsupervised approach allows to \emph{learn to minimize} the Lasso cost function $F_x~.$
\begin{proposition}[Pointwise convergence]
    \label{prop:pointwise_conv}
    Let $\tilde{\Theta}^{(T)}$ found by solving Problem~\eqref{eq:unsupervised}.\\
    For $x\in \mathcal{B}_{\infty}$ such that $p(x) > 0~,$ $F_x(\Phi_{\tilde{\Theta}^{(T)}}(x)) \xrightarrow[T\to+\infty]{} F_x^*$ almost everywhere.
\end{proposition}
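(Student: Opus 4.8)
The plan is to exhibit, for each fixed $T$, a \emph{feasible} parameter choice $\Theta^{(T)}_{\mathrm{ISTA}}$ for Problem~\eqref{eq:unsupervised} — namely the one that reproduces $T$ iterations of ISTA, i.e. $W^{(t)}=D$, $\alpha^{(t)}=\beta^{(t)}=1/L$ for all $t$ — and to compare the learned loss against the loss attained by this particular choice. Since $\tilde\Theta^{(T)}$ minimizes $\mathcal{L}$, we have
\begin{equation}
    \mathbb{E}_{x\sim p}\big[F_x(\Phi_{\tilde\Theta^{(T)}}(x))\big] \;=\; \mathcal{L}(\tilde\Theta^{(T)}) \;\leq\; \mathcal{L}(\Theta^{(T)}_{\mathrm{ISTA}}) \;=\; \mathbb{E}_{x\sim p}\big[F_x(z^{(T)}_{\mathrm{ISTA}}(x))\big] \enspace,
\end{equation}
where $z^{(T)}_{\mathrm{ISTA}}(x)$ is the $T$-th ISTA iterate. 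Subtracting $\mathbb{E}_{x\sim p}[F_x^*]$ from both sides (this is finite on the relevant set, e.g. bounded by $F_x(0)=\tfrac12\|x\|^2$ which is bounded on $\mathcal{B}_\infty$), and using that $F_x(\Phi_{\tilde\Theta^{(T)}}(x)) - F_x^* \geq 0$ pointwise, gives a nonnegative sequence of functions whose expectation tends to $0$: indeed the right-hand side, $\mathbb{E}_{x\sim p}[F_x(z^{(T)}_{\mathrm{ISTA}}(x)) - F_x^*]$, converges to $0$ as $T\to\infty$ by the ISTA convergence rate (Beck–Teboulle), which bounds $F_x(z^{(T)}_{\mathrm{ISTA}}(x)) - F_x^* \leq L\|z^*(x)\|^2/(2T)$, together with a uniform bound on $\|z^*(x)\|$ for $x\in\mathcal{B}_\infty$ and dominated convergence.

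\textbf{From $L^1$ convergence to almost-everywhere convergence.} The remaining point is to upgrade "expectation of a nonnegative sequence tends to $0$" to "the sequence tends to $0$ almost everywhere on $\{p>0\}$". A nonnegative sequence converging to $0$ in $L^1(p)$ need not converge pointwise a.e. in general, but it has a subsequence that does. Here, however, we can use monotonicity: because $\tilde\Theta^{(T+1)}$ is free to ignore its last layer and copy $\tilde\Theta^{(T)}$ for the first $T$ layers while applying one ISTA step as the last layer (ISTA steps are nonincreasing in $F_x$ by Eq.~\eqref{eq:majorization2}), we get $\mathcal{L}(\tilde\Theta^{(T+1)}) \leq \mathcal{L}(\tilde\Theta^{(T)})$; but this only gives monotonicity of the integral, not of the integrand. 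A cleaner route: define $g_T(x) \triangleq \inf_{T'\geq T}\big(F_x(\Phi_{\tilde\Theta^{(T')}}(x)) - F_x^*\big)$. This is nondecreasing in $T$, nonnegative, bounded above by the $L^1$-vanishing sequence, so by monotone convergence $\int g_T\,dp \to \int \liminf_T (F_x(\Phi_{\tilde\Theta^{(T)}}(x)) - F_x^*)\,dp$, and since $\int g_T \,dp \leq \mathcal{L}(\tilde\Theta^{(T)}) - \mathbb{E}[F_x^*] \to 0$, we get $\liminf_T (F_x(\Phi_{\tilde\Theta^{(T)}}(x)) - F_x^*) = 0$ a.e. Combined with the pointwise lower bound $F_x(\Phi_{\tilde\Theta^{(T)}}(x)) \geq F_x^*$, and with the monotonicity argument above applied \emph{pointwise} along the optimal sequence — which actually does hold: for each $x$, $F_x(\Phi_{\tilde\Theta^{(T+1)}}(x))$ need not be $\leq F_x(\Phi_{\tilde\Theta^{(T)}}(x))$, so one must be slightly careful — the safest statement is that $\liminf$ equals $F_x^*$ a.e., and since the paper asserts convergence, one argues the full limit via the following: the $L^1$ bound gives $\sum_T \mathbb{P}\text{-measure of } \{F_x(\Phi_{\tilde\Theta^{(T)}}(x)) - F_x^* > \varepsilon_T\}$ is controllable for a suitable $\varepsilon_T\to 0$ chosen from the ISTA rate, then Borel–Cantelli.

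\textbf{Main obstacle.} I expect the main subtlety to be exactly this last measure-theoretic step: passing from $L^1(p)$-convergence of the nonnegative loss gap to genuine almost-everywhere convergence. The comparison inequality with plain ISTA is immediate and is the conceptual heart, but "almost everywhere" in the statement is doing real work and is not a free consequence of the integral bound; the honest proof either invokes Borel–Cantelli with the explicit ISTA rate $O(1/T)$ (which is summable after subsampling, or one restricts to $T$ along which the rate decays fast enough) or, more elegantly, exploits that along the sequence of \emph{optimal} parameters one may always append ISTA layers to build a competitor, forcing the integrand itself to be eventually monotone up to an arbitrarily small additive error. I would present the Borel–Cantelli version as the cleanest: choose $T_k$ so that $\mathbb{E}[F_x(z^{(T_k)}_{\mathrm{ISTA}}(x)) - F_x^*] \leq 2^{-k}$, apply Markov to get $\mathbb{P}(F_x(\Phi_{\tilde\Theta^{(T_k)}}(x)) - F_x^* > k^{-1}) \leq k\,2^{-k}$, sum, conclude a.e. convergence along $(T_k)$, and finally remove the subsequence restriction using the (pointwise, easy) fact that appending ISTA layers to $\tilde\Theta^{(T_k)}$ is a valid competitor for every intermediate $T$, sandwiching the loss gap between consecutive $T_k$'s.
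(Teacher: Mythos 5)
Your core argument is exactly the paper's: use the ISTA parameters $\theta_{\text{ISTA}}^{(t)} = (D, 1/L, 1/L)$ as a feasible competitor for Problem~\eqref{eq:unsupervised}, deduce $\mathbb{E}_{x\sim p}[F_x^*] \le \mathcal{L}(\tilde{\Theta}^{(T)}) \le \mathbb{E}_{x\sim p}[F_x(\Phi_{\Theta_{\text{ISTA}}^{(T)}}(x))]$, and let the ISTA rate (uniform over the compact $\mathcal B_\infty$) squeeze the middle term, giving $\mathbb{E}_{x\sim p}[F_x(\Phi_{\tilde{\Theta}^{(T)}}(x)) - F_x^*]\to 0$. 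Where you diverge is the final step, and you are right to worry about it: the paper concludes almost-everywhere convergence directly from ``$L^1$ convergence of a non-negative sequence,'' which is not a valid implication in general (the standard moving-bump/typewriter counterexample). Your two patches are correct as far as they go: monotone convergence applied to $g_T(x) = \inf_{T'\ge T}\bigl(F_x(\Phi_{\tilde{\Theta}^{(T')}}(x)) - F_x^*\bigr)$ gives $\liminf_T \bigl(F_x(\Phi_{\tilde{\Theta}^{(T)}}(x)) - F_x^*\bigr) = 0$ a.e., and Markov plus Borel--Cantelli gives a.e. convergence along any subsequence $T_k$ for which the $O(1/T)$ integral bound is summable (e.g.\ $T_k = 2^k$).

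However, your proposed bridge from the subsequence to the full sequence does not close the gap. Appending ISTA layers to $\tilde{\Theta}^{(T_k)}$ yields a competitor at depth $T\in(T_k, T_{k+1})$, which controls $\mathcal{L}(\tilde{\Theta}^{(T)})$ --- an expectation --- but says nothing pointwise about $F_x(\Phi_{\tilde{\Theta}^{(T)}}(x))$ at a fixed $x$: the actual minimizer $\tilde{\Theta}^{(T)}$ can be unrelated to $\tilde{\Theta}^{(T_k)}$ and may do worse at a given $x$ while doing better on average, so there is no pointwise sandwich between consecutive $T_k$'s. Thus neither your argument nor the paper's establishes full-sequence a.e.\ convergence from the integral bound alone; what is rigorously available is convergence along a subsequence, or of the $\liminf$. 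The one case where the full statement follows cleanly is when $p$ is discrete: if $p(\{x\})>0$, Markov's inequality gives $F_x(\Phi_{\tilde{\Theta}^{(T)}}(x)) - F_x^* \le \bigl(\mathcal{L}(\tilde{\Theta}^{(T)}) - \mathbb{E}_{x'\sim p}[F_{x'}^*]\bigr)/p(\{x\}) = O(1/T)$ pointwise, which is presumably what the hypothesis ``$p(x)>0$'' is meant to capture. You should commit to one precise conclusion (subsequence a.e., or pointwise under atoms) rather than gesturing at several possible fixes.
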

\begin{proof}
Let $\Theta_{\text{ISTA}}^{(T)}$ the parameters corresponding to ISTA \ie{} $\theta_{\text{ISTA}}^{(t)} = (D, 1/L, 1/L)~.$ For all $T~,$ we have $\mathbb{E}_{x\sim p}[F_x^*]\leq \mathbb{E}_{x\sim p}[F_x(\Phi_{\tilde{\Theta}^{(T)}}(x))]\leq \mathbb{E}_{x\sim p}[F_x(\Phi_{\Theta_{\text{ISTA}}^{(T)}}(x))]~.$ Since ISTA converges uniformly on any compact, the right hand term goes to $\mathbb{E}_{x\sim p}[F_x^*]~.$ Therefore, by the squeeze theorem, $ \mathbb{E}_{x\sim p}[F_x(\Phi_{\tilde{\Theta}^{(T)}}(x)) - F_x^*]\to 0~.$
This implies almost sure convergence of $F_x(\Phi_{\tilde{\Theta}^{(T)}}(x)) - F_x^*$ to $0$ since it is non-negative.
\end{proof}

\paragraph{Asymptotical weight coupling theorem}

In this paragraph, we show the main result of this paper: any LISTA network minimizing $F_x$ on $\mathcal B_\infty$ reduces to SLISTA in its deep layers (\autoref{thm:coupling}). It relies on the following Lemmas.

\begin{restatable}[Stability of solutions around $D_j$]{lemma}{stability}\label{lemma:stability}
    Let $D\in \Rset^{n\times m}$ be a dictionary with non-duplicated unit-normed columns.
    Let $c \triangleq \max_{l \neq j} |D_l^\top D_j| < 1~.$
    Then for all $j \in \llbracket 1, m \rrbracket$ and $\varepsilon \in \Rset^{m}$ such that $\|\varepsilon\| < \lambda(1-c)$ and $ D_j^\top\varepsilon=0~,$ the vector $(1  - \lambda ) e_j$ minimizes $F_x$ for $x = D_j + \varepsilon~.$
\end{restatable}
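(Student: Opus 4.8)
The plan is to verify the KKT conditions from \autoref{lemma:kkt} directly for the candidate point $z^\star = (1-\lambda)e_j$ against the input $x = D_j + \varepsilon$. Since the columns of $D$ are unit-normed, $D_j^\top D_j = 1$, and combined with $D_j^\top \varepsilon = 0$ we get
\begin{equation*}
x - Dz^\star = D_j + \varepsilon - (1-\lambda)D_j = \lambda D_j + \varepsilon \enspace.
\end{equation*}
First I would check the coordinate $j$: $D_j^\top(x - Dz^\star) = \lambda D_j^\top D_j + D_j^\top\varepsilon = \lambda$, and since $z^\star_j = 1-\lambda > 0$ (recall $0<\lambda<1$) with $\sign(z^\star_j) = 1$, the stationarity condition $D_j^\top(x-Dz^\star) = \lambda\sign(z^\star_j)$ holds exactly.

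Next I would check the coordinates $l \neq j$, where $z^\star_l = 0$, so the required condition is $|D_l^\top(x - Dz^\star)| \leq \lambda$. Compute $D_l^\top(x - Dz^\star) = \lambda D_l^\top D_j + D_l^\top\varepsilon$. Bounding via the triangle inequality and Cauchy–Schwarz, $|D_l^\top(x-Dz^\star)| \leq \lambda|D_l^\top D_j| + |D_l^\top\varepsilon| \leq \lambda c + \|D_l\|\,\|\varepsilon\| = \lambda c + \|\varepsilon\|$, again using unit-normed columns. The hypothesis $\|\varepsilon\| < \lambda(1-c)$ then gives $|D_l^\top(x - Dz^\star)| < \lambda c + \lambda(1-c) = \lambda$, so the subgradient inequality is satisfied (strictly) for every $l \neq j$. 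By \autoref{lemma:kkt}, $z^\star = (1-\lambda)e_j$ is therefore a minimizer of $F_x$, which is exactly the claim.

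I do not anticipate a genuine obstacle here — the argument is a short, direct KKT verification. The only points requiring a little care are: (i) making explicit that $0 < \lambda < 1$ ensures $1-\lambda > 0$ so the sign of the $j$-th coordinate is well-defined and equals $+1$; (ii) using both parts of the hypothesis on $\varepsilon$ — orthogonality $D_j^\top\varepsilon = 0$ is what makes the $j$-th condition come out to exactly $\lambda$ rather than merely close to it, while the norm bound handles the off-support coordinates; and (iii) noting that $c < 1$ holds precisely because the columns are non-duplicated and unit-normed (if two columns coincided we would have $c = 1$ and the admissible set for $\varepsilon$ would be empty). One could additionally remark that when \Cref{assum:uniqueness} holds this minimizer is unique, but that is not needed for the statement as written.
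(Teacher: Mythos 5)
Your proof is correct and follows essentially the same route as the paper's: a direct verification of the KKT conditions of \autoref{lemma:kkt} at $(1-\lambda)e_j$, with the exact equality $D_j^\top(x-Dz^\star)=\lambda\sign(z^\star_j)$ on coordinate $j$ (using $D_j^\top\varepsilon=0$ and $1-\lambda>0$) and the strict bound $\lambda c + \|\varepsilon\| < \lambda$ on the remaining coordinates. Nothing is missing.
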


It can be proven by verifying the KKT conditions \eqref{eq:KKT} for $(1-\lambda)e_j~,$ detailed in \autoref{sec:proof:stability}.

\begin{restatable}[Weight coupling]{lemma}{wcoupling}
    \label{lemma:wcoupling}
    Let $D\in \Rset^{n\times m}$ be a dictionary with non-duplicated unit-normed columns.
    Let $\theta=(W, \alpha, \beta)$ a set of parameters.
    Assume that all the couples $(z^*(x), x) \in \Rset^m \times \mathcal{B}_{\infty}$ such that $z^*(x)\in \argmin F_x(z)$ verify $\phi_{\theta}(z^*(x), x) = z^*(x)$. Then, $\frac{\alpha}{\beta} W = D~.$
\end{restatable}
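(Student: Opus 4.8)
The plan is to exploit \Cref{lemma:stability}, which furnishes an explicit family of Lasso solutions, and to plug these into the fixed-point hypothesis $\phi_\theta(z^*(x),x) = z^*(x)$ to extract linear constraints on $W$. Concretely, fix $j \in \llbracket 1,m\rrbracket$ and pick perturbations $\varepsilon \in \Rset^m$ with $D_j^\top \varepsilon = 0$ and $\|\varepsilon\|$ small. By \Cref{lemma:stability}, for $x = D_j + \varepsilon$ the vector $z^* = (1-\lambda)e_j$ minimizes $F_x$. However, I must first check that such $x$ lies in $\mathcal{B}_\infty$: since $D$ has unit-normed columns, $D_j^\top x = D_j^\top D_j + D_j^\top\varepsilon = 1$, and for $l\neq j$, $|D_l^\top x| \le |D_l^\top D_j| + |D_l^\top\varepsilon| \le c + \|\varepsilon\|$, which is $<1$ as soon as $\|\varepsilon\| < 1-c$ (compatible with the bound $\lambda(1-c)$ since $\lambda<1$). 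So indeed $\|D^\top x\|_\infty = 1$ and the hypothesis applies to the pair $((1-\lambda)e_j, x)$.

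Next I would write out the fixed-point equation. We have $Dz^* - x = (1-\lambda)D_j - D_j - \varepsilon = -\lambda D_j - \varepsilon$, so $W^\top(Dz^*-x) = -\lambda W^\top D_j - W^\top\varepsilon$, and the pre-thresholding argument is $z^* - \alpha W^\top(Dz^*-x) = (1-\lambda)e_j + \alpha\lambda W^\top D_j + \alpha W^\top\varepsilon$. The condition $\st(\,\cdot\,,\beta\lambda) = (1-\lambda)e_j$ must hold. Looking at coordinate $j$: the argument is $(1-\lambda) + \alpha\lambda (W^\top D_j)_j + \alpha (W^\top\varepsilon)_j$ and soft-thresholding it by $\beta\lambda$ must return $1-\lambda > 0$, which forces $\alpha\lambda(W^\top D_j)_j + \alpha(W^\top\varepsilon)_j = \beta\lambda$. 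Since this must hold for all admissible $\varepsilon$ (in particular $\varepsilon$ and $-\varepsilon$, or $\varepsilon=0$), the term linear in $\varepsilon$ must vanish: $(W^\top\varepsilon)_j = 0$ whenever $D_j^\top\varepsilon = 0$, i.e. $W_j$ is parallel to $D_j$, and the remaining equation gives $\alpha(W^\top D_j)_j = \beta$, i.e. $\alpha\, W_j^\top D_j = \beta$. Combined, $W_j = \frac{\beta}{\alpha}\frac{D_j}{\|D_j\|^2} = \frac{\beta}{\alpha}D_j$, so $\frac{\alpha}{\beta}W_j = D_j$. Doing this for every $j$ yields $\frac{\alpha}{\beta}W = D$. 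For the off-diagonal coordinates $l\neq j$: there the argument is $\alpha\lambda(W^\top D_j)_l + \alpha(W^\top\varepsilon)_l$ and thresholding must return $0$; this is automatically consistent once $\|\varepsilon\|$ is small and $\lambda|(W^\top D_j)_l|$ is not too large, but in any case it imposes no obstruction to the conclusion — and it can be made rigorous by taking $\|\varepsilon\|\to 0$.

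The main obstacle I anticipate is the careful handling of the soft-thresholding: because $\st$ is only piecewise linear, the equation $\st(\,\cdot\,,\beta\lambda)=(1-\lambda)e_j$ only pins down the argument up to the "dead zone" behavior, so I need the sign/magnitude bookkeeping above (coordinate $j$ sits strictly outside the threshold interval, coordinates $l\neq j$ sit strictly inside for small $\varepsilon$) to turn it into the clean linear identity $\alpha W^\top D_j + \alpha W^\top\varepsilon = \beta\,\sign\cdots$ restricted to coordinate $j$. Secondarily, one must make sure enough perturbations $\varepsilon$ are available — the orthogonal complement of $D_j$ is $(n-1)$-dimensional, and its image under $W^\top$ restricted to coordinate $j$ is the linear functional $\varepsilon\mapsto (W^\top\varepsilon)_j = W_j^\top\varepsilon$; requiring this to vanish on $\{D_j\}^\perp$ is exactly what forces $W_j \in \mathrm{span}(D_j)$. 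Once these two points are nailed down, the argument is a short computation repeated over $j$.
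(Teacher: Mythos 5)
Your proposal is correct and follows essentially the same route as the paper: both use \Cref{lemma:stability} to produce the solutions $\bigl((1-\lambda)e_j,\,D_j+\varepsilon\bigr)$ for $\varepsilon\perp D_j$ small, read off the $j$-th coordinate of the fixed-point equation (noting it sits strictly above the threshold), and separate the constant part from the part linear in $\varepsilon$ to force $\alpha W_j=\beta D_j$. The only cosmetic difference is that the paper first derives the identity $(\alpha W_j-\beta D_j)^\top(Dz^*(x)-x)=0$ for a general $x$ via the KKT conditions before specializing, whereas you specialize immediately; your explicit check that $D_j+\varepsilon\in\mathcal B_\infty$ is a welcome detail the paper only asserts.
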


\begin{proof}[Sketch of proof (full proof in \autoref{sec:proof:wcoupling})]
    For $j \in \llbracket 1, m \rrbracket~,$ consider $x = D_j + \varepsilon~,$ with $\varepsilon^{\top}D_j=0~.$
    For $\|\varepsilon\|$ small enough, $x \in \mathcal{B}_{\infty}$ and $\varepsilon$ verifies the hypothesis of \autoref{lemma:stability}, therefore \mbox{$z^* = (1 - \lambda)e_j \in \argmin F_x~.$}
    Writing $\phi_{\theta}(z^*, x) = z^*$ for the $j$-th coordinate yields $\alpha W_j^{\top}(\lambda D_j + \varepsilon)= \lambda\beta~.$
    We can then verify that $(\alpha W_j^{\top} - \beta D_j^{\top})(\lambda D_j + \varepsilon) = 0~.$
    This stands for any $\varepsilon$ orthogonal to $D_j$ and of norm small enough.
    Simple linear algebra shows that this implies $\alpha W_j - \beta D_j = 0~.$
\end{proof}
\autoref{lemma:wcoupling} states that the Lasso solutions are fixed points of a LISTA layer only if this layer corresponds to a step size for ISTA.
The following theorem extends the lemma by continuity, and shows that the deep layers of any converging LISTA network must tend toward a SLISTA layer.

\begin{restatable}{theorem}{coupling}
    \label{thm:coupling}
    Let $D\in \Rset^{n\times m}$ be a dictionary with non-duplicated unit-normed columns.
    Let $\Theta^{(T)} = \{\theta^{(t)}\}_{t=0}^T$ be the parameters of a sequence of LISTA networks such that the transfer function of the layer $t$ is $z^{(t+1)} = \phi_{\theta^{(t)}}(z^{(t)}, x)~.$
    Assume that
    \begin{enumerate}
        \item\label{hyp:theta_cvg} the sequence of parameters converges \ie{}
            $\theta^{(t)} \xrightarrow[t\to\infty]{} \theta^* = (W^*, \alpha^*, \beta^*) \enspace ,$
        \item\label{hyp:out_cvg} the output of the network converges toward a solution $z^*(x)$ of the Lasso \eqref{eq:lasso} uniformly over the equiregularization set $\mathcal B_\infty~,$ \ie{} $\sup_{x \in \mathcal B_\infty}\|\Phi_{\Theta^{(T)}}(x) - z^*(x)\| \xrightarrow[T \to \infty]{} 0$ \enspace.
    \end{enumerate}
    Then $\frac{\alpha^*}{\beta^*}W^* = D \enspace .$
\end{restatable}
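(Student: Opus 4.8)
The plan is to deduce \autoref{thm:coupling} from \autoref{lemma:wcoupling} by a continuity/limiting argument, using hypothesis~\ref{hyp:out_cvg} to transfer the fixed-point property to the limiting layer $\theta^*$. The key observation is that \autoref{lemma:wcoupling} only needs to be applied to the \emph{limit} parameters $\theta^* = (W^*,\alpha^*,\beta^*)$: once I show that every Lasso optimal pair $(z^*(x),x)$ with $x\in\mathcal B_\infty$ satisfies $\phi_{\theta^*}(z^*(x),x) = z^*(x)$, the conclusion $\frac{\alpha^*}{\beta^*}W^* = D$ is immediate. So the whole proof reduces to establishing this fixed-point identity for $\theta^*$.

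First I would fix $x\in\mathcal B_\infty$ and let $z^*(x)$ be the Lasso solution appearing in hypothesis~\ref{hyp:out_cvg}. Consider the sequence of network iterates $z^{(t)}(x)$; by hypothesis~\ref{hyp:out_cvg} (applied both at depth $t$ and $t+1$, noting $\Phi_{\Theta^{(t)}}(x) = z^{(t)}(x)$) we have $z^{(t)}(x)\to z^*(x)$ and $z^{(t+1)}(x)\to z^*(x)$ as $t\to\infty$. On the other hand $z^{(t+1)}(x) = \phi_{\theta^{(t)}}(z^{(t)}(x),x)$. Since $\phi_\theta(z,x) = \st(z - \alpha W^\top(Dz-x),\beta\lambda)$ is jointly continuous in $(\theta,z)$ — the soft-thresholding operator is continuous and the inner expression is a polynomial in the entries — and since $\theta^{(t)}\to\theta^*$ by hypothesis~\ref{hyp:theta_cvg} while $z^{(t)}(x)\to z^*(x)$, passing to the limit gives $\phi_{\theta^*}(z^*(x),x) = z^*(x)$. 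This holds for every $x\in\mathcal B_\infty$, which is exactly the hypothesis of \autoref{lemma:wcoupling} for the parameter set $\theta^*$; applying that lemma yields $\frac{\alpha^*}{\beta^*}W^* = D$.

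The main subtlety I anticipate is making sure the continuity/limit interchange is clean: $\phi_{\theta^{(t)}}(z^{(t)}(x),x)$ has \emph{both} its parameter argument and its state argument varying with $t$, so I should invoke joint continuity of $(\theta,z)\mapsto\phi_\theta(z,x)$ rather than continuity in each argument separately — this is routine since $\st$ is (globally Lipschitz, hence) continuous and the pre-thresholding map is bilinear/affine in $(\theta,z)$, but it is the one place where a careless argument could go wrong. A secondary point worth a sentence is that hypothesis~\ref{hyp:out_cvg} gives uniform convergence over $\mathcal B_\infty$, which is more than needed here — pointwise convergence at each fixed $x$ suffices to run the argument — but the uniformity does no harm and one need not exploit it. No compactness or measure-theoretic input is required beyond what is already packaged in \autoref{lemma:stability} and \autoref{lemma:wcoupling}, so the theorem is genuinely just "\autoref{lemma:wcoupling} $+$ continuity $+$ the convergence hypotheses."
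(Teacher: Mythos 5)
Your proposal is correct and follows essentially the same route as the paper: both establish $\phi_{\theta^*}(z^*(x),x)=z^*(x)$ for all $x\in\mathcal B_\infty$ by passing to the limit in $z^{(t+1)}=\phi_{\theta^{(t)}}(z^{(t)},x)$ and then invoke \autoref{lemma:wcoupling}. The only cosmetic difference is that the paper makes the continuity step quantitative via a Lipschitz bound on a compact set and an explicit $\epsilon$ decomposition, whereas you invoke joint continuity directly; your remark that pointwise convergence suffices here is also accurate.
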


\begin{proof}[Sketch of proof (full proof in \autoref{sec:proof:thm_coupling})]

Let $\varepsilon > 0~,$ and $x \in \mathcal{B}_{\infty}~.$
Using the triangular inequality, we have
    \begin{eqnarray}
        \|\phi_{\theta^*}(z^*, x) - z^*\| & \le  &
            \|\phi_{\theta^*}(z^*, x) - \phi_{\theta^{(t)}}(z^{(t)}, x)\| +\|\phi_{\theta^{(t)}}(z^{(t)}, x) - z^*\|
    \end{eqnarray}

Since the $z^{(t)}$ and $\theta^{(t)}$ converge, they are valued over a compact set $K$.
The function $f: (z, x, \theta) \mapsto \phi_{\theta}(z, x)$ is continuous, piecewise-linear. It is therefore Lipschitz on $K$.
Hence, we have $\|\phi_{\theta^*}(z^*, x) - \phi_{\theta^{(t)}}(z^{(t)}, x)\| \leq \varepsilon$ for $t$ large enough.
Since $\phi_{\theta^{(t)}}(z^{(t)}, x) = z^{(t+1)}$ and $z^{(t)} \to z^*~,$ $\|\phi_{\theta^{(t)}}(z^{(t)}, x) - z^*\| \leq \varepsilon$ for $t$ large enough.
Finally, $\phi_{\theta^*}(z^*, x) = z^*~.$ \autoref{lemma:wcoupling} allows to conclude.
\end{proof}

\begin{figure}[hbtp]
\centering
\begin{minipage}{.35\textwidth}
    \includegraphics[width=\textwidth]{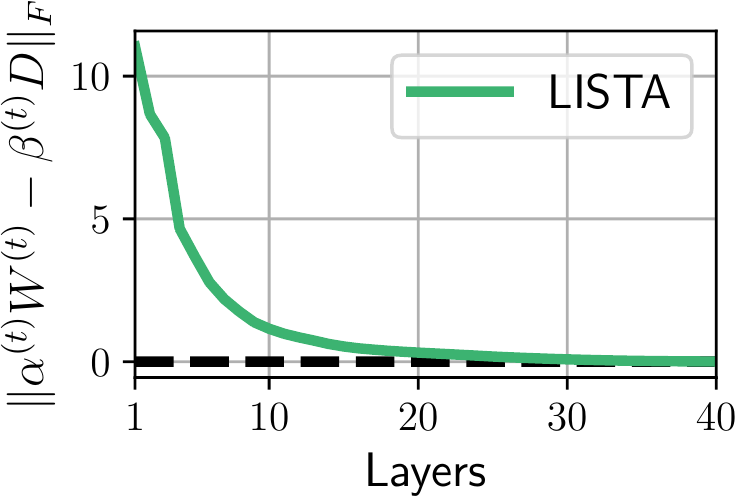}
\end{minipage}%
\begin{minipage}{0.03\textwidth}
\end{minipage}
\hfill
\begin{minipage}{.55\textwidth}
    \caption{Illustration of \autoref{thm:coupling}: for deep layers of LISTA, we have $\|\alpha^{(t)}W^{(t)} - \beta^{(t)}D\|_F \to 0~,$ indicating that the network ultimately only learns a step size.
    Full experimental setup described in \autoref{sec:supp_expe}.}
    \label{fig:fro_smilarity}
\end{minipage}
\begin{minipage}{.03\textwidth}
    \
\end{minipage}
\end{figure}

\Cref{thm:coupling} means that the deep layers of any LISTA network that converges to solutions of the Lasso correspond to SLISTA iterations: $W^{(t)}$ aligns with $D~,$ and $\alpha^{(t)}, \beta^{(t)}$ get coupled.
This is illustrated in \autoref{fig:fro_smilarity}, where a 40-layers LISTA network is trained on a $10 \times 20$ problem with $\lambda = 0.1~.$
As predicted by the theorem, $\frac{\alpha^{(t)}}{\beta^{(t)}}W^{(t)} \to D~.$
The last layers only learn a step size.
This is consistent with the observation of~\citet{Moreau2017} which shows that the deep layers of LISTA stay close to ISTA.
Further, \autoref{thm:coupling} also shows that it is hopeless to optimize the unsupervised objective~\eqref{eq:unsupervised} with $W_{\text{ALISTA}}$~\eqref{eq:alista}, since this matrix is not aligned with $D~.$

\section{Numerical Experiments}
\label{sec:expes}

This section provides numerical arguments to compare SLISTA to LISTA and ISTA.
All the experiments were run using \texttt{Python} \citep{python36} and \texttt{pytorch} \citep{paszke2017automatic}.
The code to reproduce the figures is available online\footnote{
The code can be found in supplementary materials.
}.

\paragraph{Network comparisons}

We compare the proposed approach SLISTA to state-of-the-art learned methods LISTA \citep{Chen2018} and ALISTA \citep{Liu2019} on synthetic and semi-real cases.

In the synthetic case, a dictionary $D \in \Rset^{n\times m}$ of Gaussian i.i.d. entries is generated.
Each column is then normalized to one.
A set of Gaussian i.i.d. samples $(\tilde{x}^i)_{i=1}^N \in \Rset^n$ is drawn.
The input samples are obtained as $x^i = \tilde{x}^i / \|D^{\top}\tilde{x}^i\|_{\infty} \in \mathcal{B}_{\infty}~,$ so that for all $i~,$ $x^i \in \mathcal B_\infty~.$ We set $m=256$ and $n=64$.

For the semi-real case, we used the digits dataset from \texttt{scikit-learn} \citep{scikit-learn} which consists of $8\times 8$ images of handwritten digits from $0$ to $9~.$ We sample $m = 256$ samples at random from this dataset and normalize it do generate our dictionary $D~.$ Compared to the simulated Gaussian dictionary, this dictionary has a much richer correlation structure, which is known to imper the performances of learned algorithms \citep{Moreau2017}.
The input distribution is generated as in the simulated case.

The networks are trained by minimizing the empirical loss $\mathcal L$ \eqref{eq:unsupervised} on a training set of size $N_{\text{train}} = 10,000$ and we report the loss on a test set of size $N_{\text{test}} = 10,000~.$ Further details on training are in \autoref{sec:supp_expe}.

\begin{figure}[hbtp!]
    \centering
    \includegraphics[width=\textwidth]{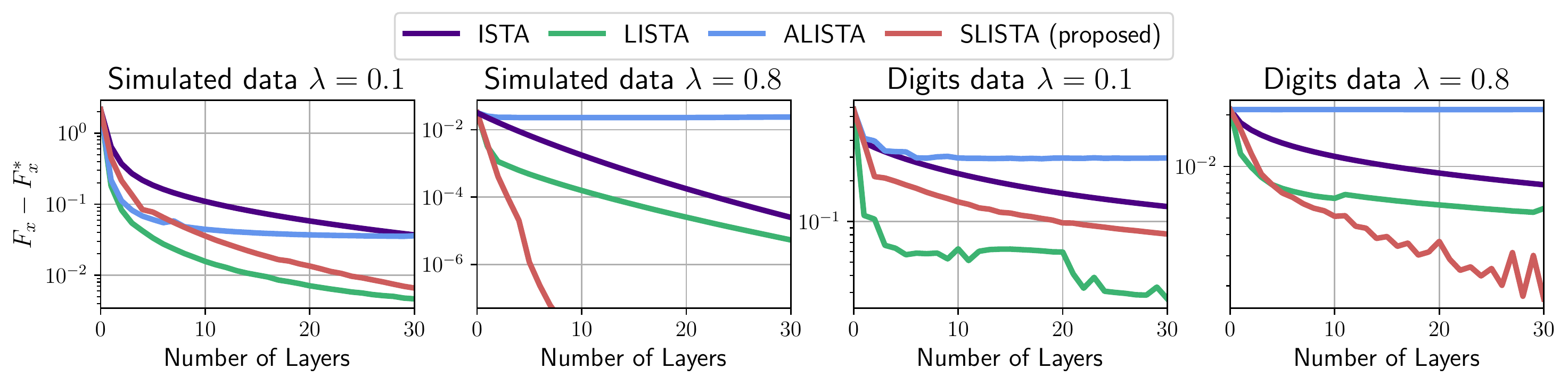}
    \caption{Test loss of ISTA, ALISTA, LISTA and SLISTA on simulated and semi-real data for different regularization parameters.}
    \label{fig:comparison_networks}
\end{figure}

\autoref{fig:comparison_networks} shows the test curves for different levels of regularization $\lambda = 0.1$ and $0.8$.
SLISTA performs best for high $\lambda$, even for challenging semi-real dictionary $D~.$
In a low regularization setting, LISTA performs best as SLISTA cannot learn larger steps due to the low sparsity of the solution.
In this unsupervised setting, ALISTA does not converge in accordance with \autoref{thm:coupling}.

\section{Conclusion}

We showed that using larger step sizes is an efficient strategy to accelerate ISTA for sparse solution of the Lasso.
In order to make this approach practical, we proposed SLISTA, a neural network architecture which learns such step sizes. %
\autoref{thm:coupling} shows that the deepest layers of any converging LISTA architecture must converge to a SLISTA layer.
Numerical experiments show that SLISTA outperforms LISTA in a high sparsity setting.
An major benefit of our approach is that it preserves the dictionary.
We plan on leveraging this property to apply SLISTA in convolutional or wavelet cases, where the structure of the dictionary allows for fast multiplications.

\bibliographystyle{plainnat}
\bibliography{biblio}

\newpage
\appendix
\setcounter{figure}{0}
\renewcommand\thefigure{\thesection.\arabic{figure}}
\renewcommand\thetable{\thesection.\arabic{table}}

\section{Unfolded optimization algorithms literature summary}

In \autoref{tab:literature_network}, we summarize the prolific literature on learned unfolded optimization procedures for sparse recovery. A particular focus is set on the chosen training loss training which is either supervised, with a regression of $z^i$ from the input $x^i$ for a given training set $(x^i, z^i)$, or unsupervised, where the objective is to minimize the Lasso cost function $F_x$ for each training point $x$.

\begin{table}[hbtp]
    \centering
    \caption{Neural network for sparse coding}
    \label{tab:literature_network}
    \begin{tabular}{|c|c|c|>{\centering}m{4em}|>{\centering}m{8em} | H@{\hspace*{-\tabcolsep}}}
        \hline
        Reference              & Base Algo & Train Loss   & Coupled weights  & Remarks     & \\\hline
        \citet{Gregor2010}     & ISTA / CD & supervised   & \xmark &      --               & \\\hline
        \citet{Sprechmann2012} & Block CD  & unsupervised & \xmark & Group $\ell_1$        & \\\hline
        \citet{Sprechmann2013a}& ADMM      & supervised   & N/A    &      --               & \\\hline
        \citet{Hershey2014}    & NMF       & supervised   & \xmark & NMF                   & \\\hline
        \citet{Wang2015}       & IHT       & supervised   & \xmark & Hard-thresholding     & \\\hline
        \citet{Xin2016}        & IHT       & supervised   & \xmark/\cmark & Hard-thresholding     & \\\hline
        \citet{Giryes2016}     & PGD/IHT   & supervised   & N/A    & Group $\ell_1$        & \\\hline
        \citet{Yang2017}       & ADMM      & supervised   & N/A    &      --               & \\\hline
        \citet{Adler2017}      & ADMM      & supervised   & N/A    & Wasserstein distance with $z^*$ & \\\hline
        \citet{Borgerding2017} & AMP       & supervised   & \xmark &    --                 & \\\hline
        \citet{Moreau2017}     & ISTA      & unsupervised & \xmark &    --                 & \\\hline
        \citet{Chen2018}       & ISTA      & supervised   & \cmark & Linear convergence rate & \\\hline
        \citet{Ito2018}        & ISTA      & supervised   & \cmark & MMSE shrinkage non-linearity  & \\\hline
        \citet{Zhang2018}      & PGD       & supervised   & \cmark & Sparsity of Wavelet coefficients& \\\hline
        \citet{Liu2019}        & ISTA      & supervised   & \cmark & Analytic weight $W_{\text{ALISTA}}$   & \\\hline
        \textbf{Proposed}      & ISTA      & unsupervised & \cmark &    --                 & \\\hline
    \end{tabular}
\end{table}

\section{Proofs of \autoref{sec:steps}'s results}
\label{sec:proof3}

\subsection{Proof of \autoref{prop:cvgfinite}}
\label{sub:proof:cvgfinite}

We consider that the solution of the Lasso is unique, following the result of \citet{Tibshirani13}[Lemmas 4 and 16] when the entries of $D$ and $x$ come from a continuous distribution.

\cvgfinite*

\begin{proof}
    Let $z^{(t)}$ be the sequence of iterates produced by \Cref{algo:oista}.
    We have a \emph{descent function}
    \begin{equation}
        F_x(z^{(t+1)}) - F_x(z^{(t)}) \leq - \frac \gamma 2 \Vert z^{(t + 1)} - z^{(t)} \Vert^2  \leq - \frac{\min \Vert D_j \Vert}{2} \Vert z^{(t + 1)} - z^{(t)} \Vert^2 \enspace,
    \end{equation}
    where $\gamma = L_S$ if \autoref{cond:star} is met, and $L$ otherwise.
    Additionally, the iterates are bounded because $F_x(z^{(t)})$ decreases at each iteration and $F_x$ is coercive.
    Hence we can apply Zangwill’s Global Convergence Theorem \citep{Zangwill69}.
    Any $z^*$ accumulation point of $(z^{(t)})_{t \in \Nset}$ is a minimizer of $F_x~.$

    Since we only consider the case where the minimizer is unique, the bounded sequence $(z^{(t)})_{t \in \Nset}$ has a unique accumulation point, thus converges to $z^*~.$

    The support identification is a simplification of a result of \citet{Hale_Yin_Zhang2008}, we include it here for completeness.

    \begin{lemma}[Approximation of the soft-thresholding]
    Let $z \in \Rset, \nu > 0~.$ For $\epsilon$ small enough,
    we have
    \begin{equation}
        \st(z + \epsilon, \nu) =
        \begin{cases}\begin{aligned}
        &0 \enspace, &\mathrm{\quad if \,} |z| < \nu \enspace, \\
        &\max(0, \epsilon) \sign(z) \enspace, &\mathrm{\quad if \,} |z| = \nu \enspace,\\
        &z + \epsilon - \nu \sign{z} \enspace, &\mathrm{\quad if \,} |z| > \nu \enspace.
       \end{aligned} \end{cases} \label{eq:st_linearization}
    \end{equation}
    \end{lemma}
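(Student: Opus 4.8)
The plan is a short case analysis on the position of $|z|$ relative to $\nu$. In each of the three regimes I would choose an explicit threshold on $|\epsilon|$ small enough to ``freeze'' the two discrete quantities that $\st$ depends on, namely $\sign(z+\epsilon)$ and whether $|z+\epsilon|$ exceeds $\nu$, and then simply read off the value from the closed form $\st(a,\nu)=\sign(a)\max(|a|-\nu,0)$. Taking the minimum of the three thresholds at the end produces a single neighborhood of $0$ in $\epsilon$ on which the displayed identity holds, which is all ``$\epsilon$ small enough'' requires.

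Concretely: if $|z|<\nu$, take $|\epsilon|<\nu-|z|$; then $|z+\epsilon|\le|z|+|\epsilon|<\nu$, the $\max$ vanishes, and $\st(z+\epsilon,\nu)=0$. If $|z|>\nu$, take $|\epsilon|<|z|-\nu$; then $|z+\epsilon|\ge|z|-|\epsilon|>\nu>0$, and since this forces $|\epsilon|<|z|$ as well, $z+\epsilon$ is nonzero with the same sign as $z$, so $|z+\epsilon|=\sign(z)\,(z+\epsilon)$; substituting into the closed form gives $\st(z+\epsilon,\nu)=\sign(z)\bigl(\sign(z)(z+\epsilon)-\nu\bigr)=z+\epsilon-\nu\sign(z)$, the third line.

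The only case carrying any bookkeeping is the boundary $|z|=\nu$: writing $z=s\nu$ with $s=\sign(z)$ and taking $|\epsilon|<\nu$, the number $z+\epsilon=s\nu+\epsilon$ is still nonzero of sign $s$, so $|z+\epsilon|=\nu+s\epsilon$ and $\st(z+\epsilon,\nu)=s\max(s\epsilon,0)$; this reproduces $\max(0,\epsilon)\sign(z)$ in the regime $z>0$ and, more generally, records the kink of $\st$ at $\epsilon=0$ (for $s=-1$ the right-hand side should be read as $\sign(z)\max(0,\epsilon\sign(z))$). There is no real obstacle here — the entire content is tracking signs at $|z|=\nu$ — and the lemma is invoked afterwards only to linearize the soft-thresholding around the coordinates of $z^*$ in the finite-support-identification step of \autoref{prop:cvgfinite}.
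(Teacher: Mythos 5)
Your case analysis is correct and complete. The paper actually states this lemma \emph{without proof} (describing it only as a simplification of a result of \citet{Hale_Yin_Zhang2008}), so there is no authors' argument to compare against; the direct computation you give --- choosing an explicit bound on $|\epsilon|$ in each regime so as to freeze $\sign(z+\epsilon)$ and the comparison of $|z+\epsilon|$ with $\nu$, then reading off $\st(a,\nu)=\sign(a)\max(|a|-\nu,0)$ --- is exactly the intended elementary proof. Your remark on the boundary case is also right, and worth keeping: your derivation gives $\st(z+\epsilon,\nu)=\sign(z)\max(0,\epsilon\sign(z))$ when $|z|=\nu$, which coincides with the displayed $\max(0,\epsilon)\sign(z)$ only for $z=\nu>0$; for $z=-\nu$ and $\epsilon<0$ the true value is $\epsilon$ while the printed formula gives $0$, so the statement as written carries a small sign imprecision in its middle line. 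This is harmless for the paper: in the proof of \autoref{prop:cvgfinite} the lemma is applied with $z=z^*_j-\tfrac{1}{L}D_j^\top(Dz^*-x)$, and under \Cref{assum:uniqueness} one has $E=\supp(z^*)$, so every $j\in E$ has $z^*_j\neq 0$ and only the first and third cases are ever invoked.
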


    Let $\rho > 0$ be such that \Cref{eq:st_linearization} holds for $\nu = \lambda / L~,$ every $\epsilon < \rho~,$ and every $z = z_j^* - \frac1L D_j^\top(Dz^* - x)~.$

    Let $t \in \Nset$ such that $z^{(t)} = z^* + \epsilon~,$ with $\Vert \epsilon \Vert \leq \rho~.$ With $\epsilon' \triangleq (\Id - \frac 1L D^\top D) \epsilon~,$ we also have $\Vert \epsilon' \Vert \leq \rho~.$
    Let $j \in \llbracket 1, m \rrbracket~.$

    If $j \notin E~,$ $\vert z_j^* - \frac1L D_j^\top(Dz^* - x) \vert = \vert \frac1L D_j^\top(Dz^* - x) \vert < \lambda / L$ hence $\st( z_j^* - \frac1L D_j^\top(Dz^* - x) +  \epsilon'_j, \lambda/L) = 0~.$

    If $j \in E~,$ $\vert z_j^* - \frac1L D_j^\top(Dz^* - x) \vert = | z^*_j +\frac{\lambda}{L} \sign z^*_j | > \lambda / L~,$ and
    $\sign \st( z_j^* - \frac1L D_j^\top(Dz^* - x) +  \epsilon'_j, \lambda/L) = \sign z^*_j~.$

    The same reasoning can be applied  with $\rho'$ such that  \Cref{eq:st_linearization} holds for $\nu = \lambda / L_{S^*}~,$ every $\epsilon < \rho'~,$ and every $z = z_j^* - \frac1{L_S^{*}} D_j^\top(Dz^* - x)$.
    If we introduce $\eta > 0$ such that $\Vert \epsilon \Vert \leq \eta \implies \Vert (\Id - \frac{1}{L_{S^*}} D\top D) \epsilon \Vert \leq \rho' ~,$
    in the ball of center $z^*$ and radius $\eta~,$ the iteration with step size $L_{S^*}$ identifies the support.

    Additionnally, since $\Id - \frac{1}{L_{S^*}} D_{S^*}^\top D_{S^*}$ is non expansive on vectors which support is $S^*~,$ the iterations with the step $L_{S^*}$ never leave this ball once they have entered it.

    Therefore, once the iterates enter $\mathcal{B}(z^*, \min(\eta, \rho))~,$ \autoref{cond:star} is always satisfied.

\end{proof}

\subsection{Proof of \autoref{prop:rates}}
\label{sub:proof:rates}

\sublinear*
\begin{proof}
    For $t \geq T^*~,$ the iterates support is $S^*$ and the objective function is $L_{S^*}$-smooth instead of $L$-smooth.
    It is also $\mu^*$ strongly convex if $\mu^*>0~.$
    The obtained rates are a classical result of the proximal gradient descent method in these cases.
\end{proof}

\section{Proof of \autoref{sec:learning}'s Lemmas}
\label{sec:proof:lemma}

\subsection{Proof of \autoref{lemma:stability}}

\label{sec:proof:stability}

\stability*
\begin{proof}
    Let $j \in \llbracket 1, m \rrbracket$ and let $\varepsilon \in \Rset^{m} \cap D_j^\perp$ be a vector such that $\|\varepsilon\| < \lambda(1-c)~.$\\
    For notation simplicity, we denote $z^* = z^*(D_j - \varepsilon)~.$
    \begin{align}
        D^\top_j(Dz^* - D_j - \varepsilon)
        &= D^\top_j(- \lambda D_j - \varepsilon) = - \lambda = - \lambda \sign z^*_j ~,
    \end{align}
    since $1 - \lambda > 0~.$
    For the other coefficients $l \in\llbracket 1, m \rrbracket \setminus \{j\}~,$ we have
    \begin{align}
    |D^\top_l(Dz^* - D_j - \varepsilon)|
      &= |D^\top_l(- \lambda D_j - \varepsilon)|~,\\
      &= | \lambda D^\top_lD_j
                + D^\top_l\varepsilon)|~,\\
      &\le \lambda |D^\top_lD_j| + |D_l^\top\varepsilon|~,\\
      &\le \lambda c + \|D_l\|\|\varepsilon\|~,\\
      &\le \lambda c + \|\varepsilon\| < \lambda~,\\
    \end{align}
    Therefore, $(1  - \lambda) e_j$ verifies the KKT conditions \eqref{eq:KKT} and $z^*(D_j + \varepsilon) = (1  - \lambda) e_j~.$
\end{proof}

\subsection{Proof of \autoref{lemma:wcoupling}}
\label{sec:proof:wcoupling}

\wcoupling*

\begin{proof}
    Let $x\in\mathcal B_\infty$ be an input vector and $z^*(x)\in\Rset^m$ be a solution for the Lasso at level $\lambda > 0~.$
    Let $j \in \llbracket 1, m \rrbracket$ be such that $z^*_j > 0~.$
    The KKT conditions \eqref{eq:KKT} gives
    \begin{equation}
        \label{eq:weigh_coupling:kkt}
        D_j^\top(Dz^*(x) - x) = -\lambda \enspace.
    \end{equation}
     Suppose that $z^*(x)$ is a fixed point of the layer, then we have
     \begin{equation}
         \st(z^*_j(x) - \alpha W_j^\top(Dz^*(x) - x), \lambda\beta) = z_j^*(x) > 0 \enspace.
     \end{equation}
     By definition, $\st(a, b) > 0$ implies that $a > b$ and $\st(a, b) = a - b~.$ Thus,
     \begin{align}
         &z^*_j(x) - \alpha W_j^\top(Dz^*(x) - x) - \lambda\beta = z_j^*(x)\\
         \Leftrightarrow ~~~~~& \alpha W_j^\top(Dz^*(x) - x) + \lambda\beta = 0
         \\
         \Leftrightarrow ~~~~~& \alpha W_j^\top(Dz^*(x) - x) - \beta D_j^\top(Dz^*(x) - x) = 0 \quad \quad \text{by \eqref{eq:weigh_coupling:kkt}}
         \\
         \Leftrightarrow ~~~~~& (\alpha W_j - \beta D_j)^\top(Dz^*(x) - x) = 0 \enspace .
         \label{eq:weight_coupling:relation}
     \end{align}
     As the relation \eqref{eq:weight_coupling:relation} must hold for all $x\in\mathcal B_\infty~,$ it is true for all $D_j + \varepsilon$ for all $\varepsilon \in \mathcal B(0, \lambda (1-c))\cap D_j^\perp~.$ Indeed, in this case, $\|D^\top (D_j + \varepsilon)\|_\infty = 1~.$
     $D$ verifies the conditions of \Cref{lemma:stability}, and thus $z^* = (1-\lambda)e_j~,$ \ie{}
      \begin{align}
          (\alpha W_j - \beta D_j)^\top (D(1  - \lambda) e_j - (D_j + \varepsilon)) &= 0 \\
          (\alpha W_j - \beta D_j)^\top \left(- \lambda D_j - \varepsilon \right) &= 0
          \label{eq:stability}
     \end{align}
     Taking $\varepsilon = 0$ yields $(\alpha W_j - \beta D_j)^\top D_j = 0~,$ and therefore Eq.~\eqref{eq:stability} becomes $(\alpha W_j - \beta D_j)^\top \varepsilon = 0$ for all $\varepsilon$ small enough and orthogonal to $D_j~,$ which implies $\alpha W_j - \beta D_j = 0$ and concludes our proof.
\end{proof}

\subsection{Proof of \autoref{thm:coupling}}
\label{sec:proof:thm_coupling}

\coupling*

\begin{proof}
     For simplicity of the notation, we will drop the $x$ variable whenever possible, \ie{} $z^* = z^*(x)$ and $\phi_\theta(z) = \phi_\theta(z, x)~.$ We denote $z^{(t)} = \Phi_{\Theta^{(t)}}(x)$ the output of the network with $t$ layers.\\
    Let $\epsilon > 0~.$ By hypothesis \ref{hyp:theta_cvg}, there exists $T_0$ such that for all $t \ge T_0~,$
    \begin{equation}
        \label{eq:thm:parameter_cvg}
        \|W^{(t)} - W^*\| \le \epsilon ~~~
        |\alpha^{(t)} - \alpha^*| \le \epsilon ~~~
        |\beta^{(t)} - \beta^*| \le \epsilon~.
    \end{equation}
    By hypothesis \ref{hyp:out_cvg}, , there exists $T_1$ such that for all $t \ge T_1$ and all $x \in \mathcal B_\infty~,$
    \begin{equation}
        \label{eq:thm:iterate_cvg}
       \|z^{(t)} - z^*\| \le \epsilon ~.
    \end{equation}
    Let $x\in\mathcal B_\infty$ be an input vector and $t \ge \max(T_0, T_1)~.$
    Using \eqref{eq:thm:iterate_cvg}, we have
    \begin{eqnarray}
        \|z^{(t+1)} - z^{(t)}\| & \le &  \|z^{(t+1)} - z^*\| +  \|z^{(t)} - z^*\| \le 2\epsilon
        \label{eq:thm:cauchy_cvg}
    \end{eqnarray}
    By \ref{hyp:theta_cvg}, there exist a compact set $\mathcal K_1 \subset \Rset^{n \times m} \times \Rset^+_* \times \Rset^+_*$ \emph{s.t.} $\theta^{(t)} \in \mathcal K_1$ for all $t \in\mathbb N$ and $\theta^* \in \mathcal K~.$ The input $x$ is taken in a compact set $\mathcal B_\infty$ and as $z^* = \argmin_z F_x(z)~,$ we have $\lambda\|z\|_1 \le F_x(z^*) \le F_x(0) = \|x\|$ thus $z^*$ is also in a compact set $\mathcal K_2~.$\\
    We consider the function $f(z, x, \theta) = \st(z - \alpha W^\top(Dz - x), \beta)$ on the compact set $\mathcal K_2 \times \mathcal B_\infty \times \mathcal K_1~.$ This function is continuous and piece-wise linear on a compact set. It is thus $L$-Lipschitz and thus
    \begin{eqnarray}
        \|\phi_{\theta^{(t)}}(z^{(t)}) - \phi_{\theta^{(t)}}(z^*)\|
            & \le & L\|z^{(t)} - z^*\| \le L\epsilon \label{eq:thm:shs}\\
        \|\phi_{\theta^*}(z^*) - \phi_{\theta^{(t)}}(z^*)\|
            & \le & L\|\theta^{(t)} - \theta^*\| \le L\epsilon \label{eq:thm:fhs}
    \end{eqnarray}
    Using these inequalities, we get
    \begin{eqnarray}
        \|\phi_{\theta^*}(z^*, x) - z^*\|
            & \le  &
            \underbrace{\|\phi_{\theta^*}(z^*) - \phi_{\theta^{(t)}}(z^*)\|}_{< L\epsilon~\text{by \eqref{eq:thm:fhs}}}
            +\underbrace{\|\phi_{\theta^{(t)}}(z^*) - \phi_{\theta^{(t)}}(z^{(t)})\|}_{< L\epsilon~\text{by \eqref{eq:thm:shs}}}\\
            && +\underbrace{\|\phi_{\theta^{(t)}}(z^{(t)}) - z^{(t)}\|}_{< 2\epsilon ~\text{ by \eqref{eq:thm:cauchy_cvg}}}
            +\underbrace{\|z^{(t)} - z^*\|}_{< \epsilon ~\text{ by \eqref{eq:thm:iterate_cvg}}} \nonumber\\
            & \le & (2L + 3)\epsilon~.
    \end{eqnarray}
    As this result holds for all $\epsilon >0$ and all $x\in\mathcal B_\infty~,$ we have $\phi_{\theta^*}(z^*) = z^*$ for all $x \in \mathcal B_\infty~.$ We can apply the \autoref{lemma:wcoupling} to conclude this proof.
\end{proof}

\section{Experimental setups and supplementary figures}
\label{sec:supp_expe}

\textbf{Dictionary generation}: Unless specified otherwise, to generate synthetic dictionaries, we first draw a random i.i.d. Gaussian matrix $\hat{D} \in \Rset^{n \times m}$.
The dictionary is obtained by normalizing the columns:
$D_{ij} = \frac{1}{\|\hat{D}_{i:}\|}\hat{D}_{ij}$.

\textbf{Samples generation}: The samples $x$ are generated as follows: Random i.i.d. Gaussian samples $\hat{x}\in \Rset^n$ are generated. We then normalize them: $x = \frac{1}{\|D^{\top}\hat{x}\|_{\infty}}\hat{x}$, so that $x \in \mathcal B _{\infty}$.

\textbf{Training the networks} Since the loss function and the network are continuous but non-differentiable, we use sub-gradient descent for training.
The sub-gradient of the cost function with respect to the parameters of the network is computed by automatic differentiation.
We use full-batch sub-gradient descent with a backtracking procedure to find a suitable learning rate.
To verify that we do not overfit the training set, we always check that the test loss and train loss are comparable.

\paragraph{Main text figures setup}
\begin{itemize}
    \item \autoref{fig:oista_ista}: We generate a random dictionary of size $10 \times 50$. We take $\lambda=0.5$, and a random sample $x\in \mathcal{B}_{\infty}$. $F_x^*$ is computed by iterating ISTA for $10000$ iterations.
    \item \autoref{fig:learned_steps}: We generate a random dictionary of size $10 \times 20$. We take $\lambda=0.2$.
    We generate a training set of $N=1000$ samples $(x^i)_{i=1}^{1000}\in\mathcal B _{\infty}$.
    A 20 layers SLISTA network is trained by gradient descent on these data. We report the learned step sizes.
    For each layer $t$ of the network and each training sample $x$, we compute the support at the output of the $t$-th layer, $S(x, t)= \supp(z^{(t)}(x))$.
    For each $t$, we display the quantiles of the distribution of the $(1/L_{S(x^i, t)})_{i=1}^{1000}$.
    \item \autoref{fig:fro_smilarity}: A random $10 \times 20$ dictionary is generated.
    We take $1000$ training samples, and $\lambda=0.05$. A $40$ layers LISTA network is trained by gradient descent on those samples.
    We report the quantity $\|\alpha^{(t)}W^{(t)} - \beta^{(t)}D\|_F$ for each layer $t$.
\end{itemize}

\pagebreak
\paragraph{\color{black}Supplementary experiments}

{\color{white}.}

\vskip2em
\begin{figure}[hp!]
    \begin{minipage}{.4\textwidth}
    \centering
    \includegraphics[width=\columnwidth]{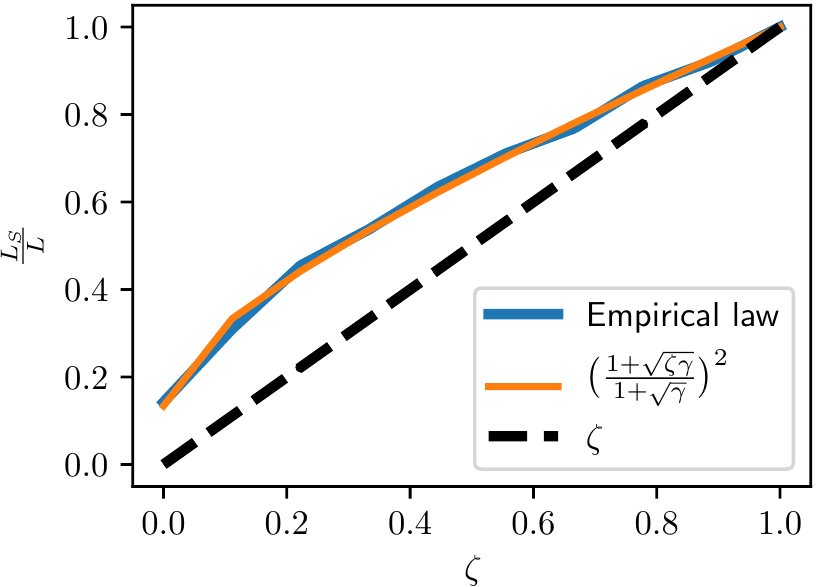}
    \end{minipage}\hfill %
    \begin{minipage}{.58\textwidth}
        \caption{Illustration of~\Cref{prop:eig}. A toy Gaussian dictionary is generated with $n=200~,$ $m=600$ so that $\gamma=3~.$
        We compute its Lipschitz constant $L~.$
        For $\zeta$ between $0$ and $1~,$ we extract $\lfloor \zeta m \rfloor$ columns at random and compute the corresponding Lipschitz constant $L_S~.$ The plot shows an almost perfect fit between the empirical law and the theoretical limit~\eqref{eq:ratioL}.}
        \label{fig:lip_distrib}
    \end{minipage}
\end{figure}

\vskip2em
\begin{figure}[hp!]
    \centering
        \includegraphics[width=.65\textwidth]{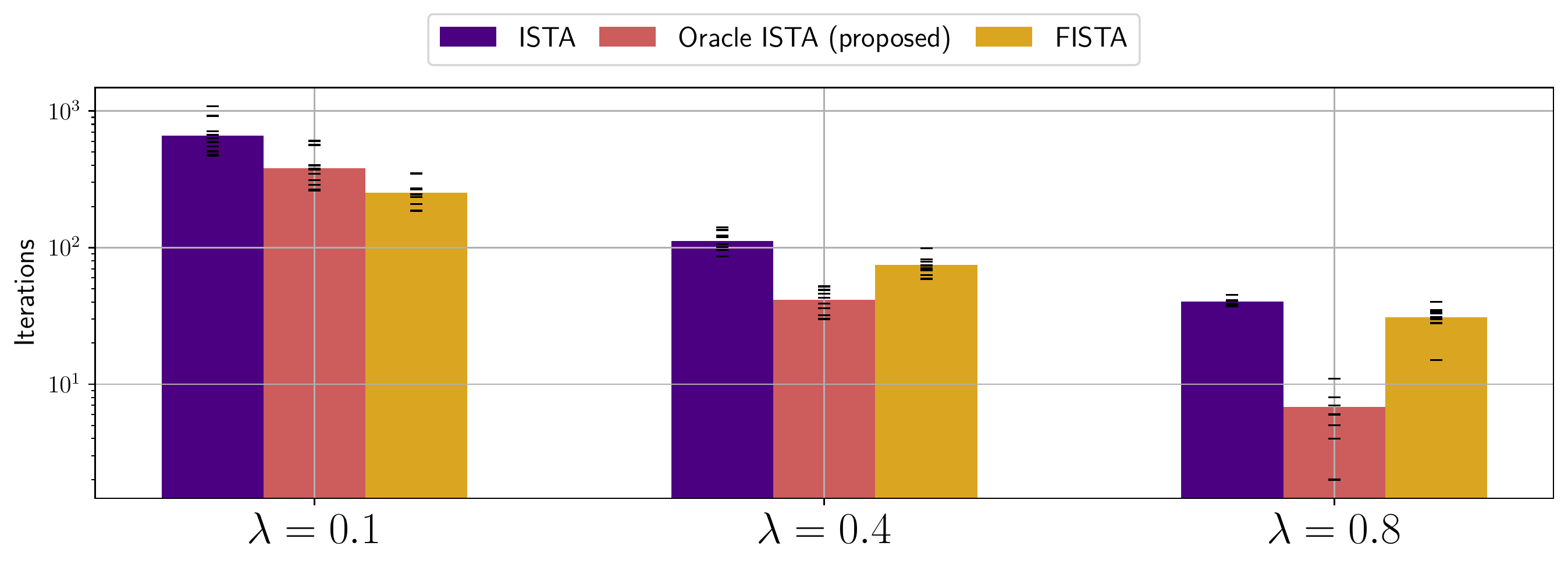}
    \caption{Comparison between ISTA, FISTA and Oracle-ISTA for different levels of regularization on a Gaussian dictionnary, with $n=100$ and $m=200$. We report the average number of iterations taken to reach a point $z$ such that  $F_x(z) < F^*_x + 10^{-13}$. The experiment is repeated $10$ times, starting from random points in $\mathcal{B}_{\infty}$.
    OISTA is always faster than ISTA, and is faster than FISTA for high regularization.}
    \label{fig:comparison_oista}
\end{figure}

\vskip2em
\begin{figure}[hp!]
    \begin{minipage}{.4\textwidth}
        \centering
        \includegraphics[width=\columnwidth]{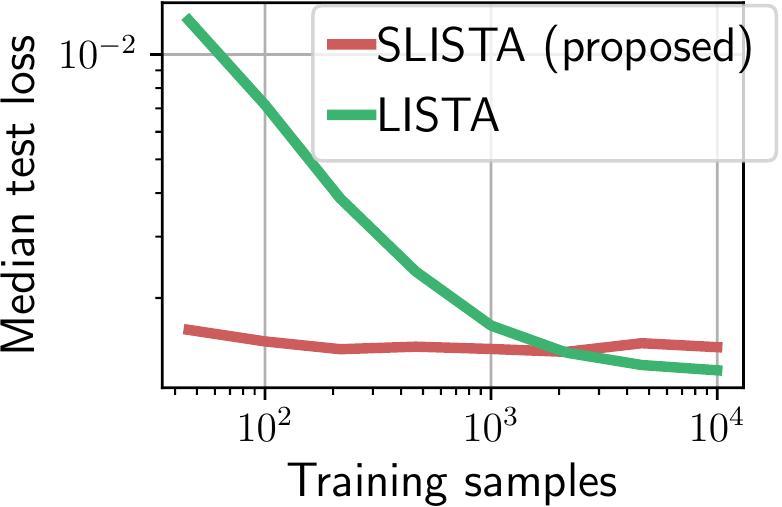}
    \end{minipage}
    \hfill
    \begin{minipage}{.58\textwidth}
        \centering
        \caption{Learning curves of SLISTA and LISTA.
        Random Gaussian dictionaries with $n=10$ and $m=20$ are generated.
        We take $\lambda=0.3$.
        Networks with 10 layers are fit on those dictionaries, and their test loss is reported for different number of training samples.
        The process is repeated $100$ times; the curves shown display the median of the test-loss.
        }
    \end{minipage}
\label{fig:learning_curve}
\end{figure}
\end{document}